\newtheorem{theorem}{Theorem}
\newtheorem{lemma}{Lemma}
\newtheorem{assumption}{Assumption}
\begin{document}

\title{\textbf{Exactly Robust Kernel Principal Component Analysis}}
\author{Jicong Fan, Tommy W.S. Chow,~\IEEEmembership{Fellow,~IEEE}
\IEEEcompsocitemizethanks{\IEEEcompsocthanksitem The authors are with the Department of Electronic Engineering, City University of Hong Kong, Kowloon, Hong Kong SAR, China. \protect\\
Corresponding author: Tommy W.S. Chow. E-mail: eetchow@cityu.edu.hk

$\copyright$ 2019 IEEE.  Personal use of this material is permitted.  Permission from IEEE must be obtained for all other uses, in any current or future media, including reprinting/republishing this material for advertising or promotional purposes, creating new collective works, for resale or redistribution to servers or lists, or reuse of any copyrighted component of this work in other works.}
}


\IEEEtitleabstractindextext{%
\begin{abstract}
Robust principal component analysis (RPCA) can recover low-rank matrices when they are corrupted by sparse noises. In practice, many matrices are, however, of high-rank and hence cannot be recovered by RPCA. We propose a novel method called robust kernel principal component analysis (RKPCA) to decompose a partially corrupted matrix as a sparse matrix plus a high or full-rank matrix with low latent dimensionality. RKPCA can be applied to many problems such as noise removal and subspace clustering and is still the only unsupervised nonlinear method robust to sparse noises. Our theoretical analysis shows that, with high probability, RKPCA can provide high recovery accuracy. The optimization of RKPCA involves nonconvex and indifferentiable problems. We propose two nonconvex optimization algorithms for RKPCA. They are alternating direction method of multipliers with backtracking line search and proximal linearized minimization with adaptive step size. Comparative studies in noise removal and robust subspace clustering corroborate the effectiveness and superiority of RKPCA.
\end{abstract}

\begin{IEEEkeywords}
RPCA, low-rank, high-rank, kernel, sparse, noise removal, subspace clustering.
\end{IEEEkeywords}}

\maketitle
\IEEEdisplaynontitleabstractindextext
\section{Introduction}
\IEEEPARstart{P}{rincipal} component analysis (PCA) \cite{PCA_Jolliffe2002} is a well-known and powerful technique that has been widely used in many areas such as computer science, economy, biology, and chemistry. In these areas, the data are often redundant, which means they can often be represented by reduced number of features. PCA finds a set of orthogonal projections to transform high-dimensional data or observed variables into low-dimensional latent variables with the reconstruction errors being minimized. The orthogonal projections can be found by singular value decomposition or eigenvalue decomposition. PCA is widely used for dimensionality reduction, feature extraction, and noise removal. 

PCA is a linear method that is not effective in handling nonlinear data. To solve nonlinear problem, kernel PCA (KPCA) \cite{KPCA1998} was derived. In KPCA, observed data are transformed by a nonlinear function into a high (possibly infinite) dimensional feature space using kernel trick. The latent variables can be extracted from the feature space without explicitly carrying out the nonlinear mapping. KPCA has been applied to many practical problems such as dimensionality reduction, novelty detection \cite{HOFFMANN2007863,FAN2014205}, and image denoising \cite{Mika99kernelpca}. KPCA, however, cannot be directly applied to denoising problems. In \cite{Mika99kernelpca,PreImageKPCA2004TNN,PreImageKPCA2010TNN}, the pre-image problem of KPCA was studied. In the problem, the principal components obtained from the feature space are mapped back into the data space to reconstruct the observed variables, where the reconstruction errors are regarded as noises. In \cite{nguyen2009robust}, a robust KPCA was proposed to handle noise, missing data, and outliers in KPCA.
\IEEEpubidadjcol

Another limitation of PCA is that it is not robust to sparse corruptions and outliers. Thus, several robust PCA methods were proposed \cite{xu1995robust,RPCA2001,Ding:2006:RPR:1143844.1143880,RPCA}. In \cite{RPCA2001}, through using the Geman-McClure function $f(x,\sigma)=x^2/(x^2+\sigma^2)$, a robust PCA (RPCA) was developed for computer vision problems. In \cite{RPCA}, based on nuclear norm and $\ell_1$ norm minimizations, another RPCA was proposed to decompose a noisy matrix into a low-rank matrix plus a sparse matrix. RPCA of \cite{RPCA} is able to outperform that of \cite{RPCA2001} both theoretically and experimentally. The low-rank or/and sparse models have been widely studied and exploited in many problems such as matrix completion \cite{CandesRecht2009,TIP_MC_2016,Fan2017290,FAN201834,8469061} and subspace clustering \cite{SSC2009,LRR_PAMI_2013,SSC_PAMIN_2013,xiao2016robust,NLRR_2016,Fan201736,fan2018accelerated}. In subspace clustering, RPCA could outperform low-rank representation (LRR) \cite{LRR_PAMI_2013} and sparse representation (SSC) \cite{SSC_PAMIN_2013} when the data are heavily corrupted by sparse noises or/and outliers, because the dictionary used in LRR and SSC is the data matrix itself which would introduce considerable bias into the representations. If the data are pre-processed by RPCA, the clustering accuracy can be significantly improved. A few recent extensions of RPCA can be found in \cite{feng2013online,hauberg2014grassmann,zhao2014robust,pimentel2017random,7878661}.

KPCA is not robust to sparse noises \cite{nguyen2009robust} and RPCA and its recent extensions (because of the linear model and low-rank assumption \cite{RPCA,7878661}) are unable to handle nonlinear data and high-rank matrices. Therefore, there is a need to derive a variant of PCA that is able to handle nonliear data and is also robust to sparse noises. It is worth noting that the robust KPCA proposed in \cite{nguyen2009robust} is a supervised method for handling missing data and sparse noises. The method first requires to learn a KPCA model on a clean training dataset. It utilizes the model to handle new dataset with missing data or intra-sample outliers. Although the authors in \cite{nguyen2009robust} mentioned that their robust KPCA could deal with intra-sample outliers in training data through modifying the Algorithm 1 of their paper, there were no algorithmic details and experimental results. Similar to \cite{RPCA2001}, the robustness of \cite{nguyen2009robust} to intra-sample outliers was obtained using the Geman-McClure function. However, the Geman-McClure function is inferior to $\ell_1$ norm in terms of efficiency, accuracy, and theoretical guarantee under broad conditions \cite{RPCA}.

In this paper, we propose a novel method called robust kernel principal component analysis (RKPCA) to handle nonlinear data and sparse noises simultaneously. RKPCA assumes that the nonlinear transformation of the data matrix is of low-rank, which is different from the low-rank assumption of the data matrix itself in RPCA. RKPCA is effective in recovering high-rank or even full-rank matrices and robust to sparse noises and outliers. In this paper, we also provide theoretical support for RKPCA. The optimization of RKPCA is challenging because it involves nonconvex and indifferentiable problems at the same time. We propose nonconvex alternating direction method of multipliers with backtracking line search and nonconvex proximal linearized minimization with adaptive step size for RKPCA. Thorough comparative studies were conducted on synthetic data, nature images, and motion data. The experimental results verify that: (1) RKPCA is more effective than PCA, KPCA, and RPCA in recognizing sparse noises when the data have nonlinear structures; (2) RKPCA is more robust and effective than RPCA, SSC, NLRR \cite{NLRR_2016}, and RKLRS \cite{xiao2016robust} in subspace clustering.

\section{Methodology}
\subsection{Nonlinear model and RKPCA solution}
Suppose that an observed data matrix $\bm{M}\in\mathbb{R}^{d\times n}$ is given by
\begin{equation}\label{Eq.MXE}
\bm{M}=\bm{X}+\bm{E},
\end{equation}
where $\bm{X}$ is the matrix of clean data and $\bm{E}$ is the matrix of sparse noises (randomly distributed). Our goal is to recover $\bm{X}$ and $\bm{E}$. In RPCA, $\bm{X}$ is assumed to be of low-rank. Hence, RPCA aims at solving the following problem
\begin{equation}\label{Eq.RPCA_0}
\min\limits_{X,E}\textup{rank}(\bm{X})+\lambda\Vert \bm{E}\Vert_0,\ s.t. \ \bm{X}+\bm{E}=\bm{M},
\end{equation}
where $\textup{rank}(\bm{X})$ denotes the rank of $\bm{X}$, $\Vert \bm{E}\Vert_0$ is the $\ell_0$ norm of $\bm{E}$ defined by the number of non-zero elements in $\bm{E}$, and $\lambda$ is a parameter to balance the two terms. Both rank minimization and $\ell_0$ norm minimization are NP-hard. Therefore, problem (\ref{Eq.RPCA_0}) is approximated as
\begin{equation}\label{Eq.RPCA}
\min\limits_{\bm{X},\bm{E}}\Vert \bm{X}\Vert_\ast+\lambda\Vert \bm{E}\Vert_1,\ s.t. \ \bm{X}+\bm{E}=\bm{M},
\end{equation}
where $\Vert \bm{X}\Vert_\ast$ denotes the nuclear norm of $\bm{X}$ and $\Vert \bm{E}\Vert_1$ denotes the $\ell_1$ norm of $\bm{E}$. Nuclear norm, defined by the sum of singular values, is a convex relaxation of matrix rank. $\ell_1$ norm is a convex relaxation of $\ell_0$ norm. Nuclear norm minimization and $\ell_1$ norm minimization can be solved via singular value thresholding and soft thresholding respectively.

The low-rank assumption made in RPCA indicates that $\bm{X}$ is given by a low-dimensional linear latent variable model, i.e., $\bm{X}=\bm{P}\bm{Z}$, where $\bm{P}\in\mathbb{R}^{d\times r}$ is the projection matrix, $\bm{Z}\in\mathbb{R}^{r\times n}$ consists of the latent variables, and $r$ is the rank of $\bm{X}$. In practice, the columns of $\bm{X}$ may be drawn from low-dimensional nonlinear latent variable models. Specifically, in this paper, we make the following assumption.
\begin{assumption}\label{assump_0}
The columns of $\bm{X}\in\mathbb{R}^{d\times m}$ are given by 
\begin{equation}\label{Eq.nllvm}
\bm{x}=f(\bm{z}), 
\end{equation}
where $\bm{z}\in\mathbb{R}^r$ consists of uncorrelated latent variables,  $f:\mathbb{R}^r\rightarrow \mathbb{R}^d$ is a nonlinear smooth mapping, and $r\ll d$. In addition, the dimension of the manifold defined by $f$ is $r$. 
\end{assumption}
For convenience, we denote $\bm{X}=f(\bm{Z})$, where $f$ is performed on each column of $\bm{Z}\in\mathbb{R}^{r\times n}$ separately and $n>d$. Although $r$ is much smaller than $d$, $\bm{X}$ could be of high-rank or even full-rank, which is beyond the assumption of RPCA. Therefore $\bm{X}$ and $\bm{E}$ cannot be recovered by RPCA. We denote the latent dimensionality of $\bm{X}$ by $\textup{ldim}(\bm{X})$ and have $\textup{ldim}(\bm{X})\leq \textup{rank}(\bm{X})$, where the equality holds when $f$ is linear. 

For example, we uniformly draw 100 samples of a single variable $z$ from the interval $[-1,1]$ and generate a $3\times 100$ matrix by $\bm{x}=f(z)=[z,z^2,z^3]^T$. We add Gaussian noise $e\sim\mathcal{N}(0,0.1)$ to one randomly chosen entry of each column of the matrix. The clean data and corrupted data are shown in Figure \ref{Fig.syn_curve}(a). Figure \ref{Fig.syn_curve}(b) shows the data recovered by RPCA while Figure \ref{Fig.syn_curve}(c) shows the data recovered by the proposed method in this paper. Since the matrix is of full-rank though the latent dimension is $1$, RPCA failed in recovering the data. In contrast, our proposed method has a good performance. 
\begin{figure}[h!]
\centering
\includegraphics[width=8.5cm]{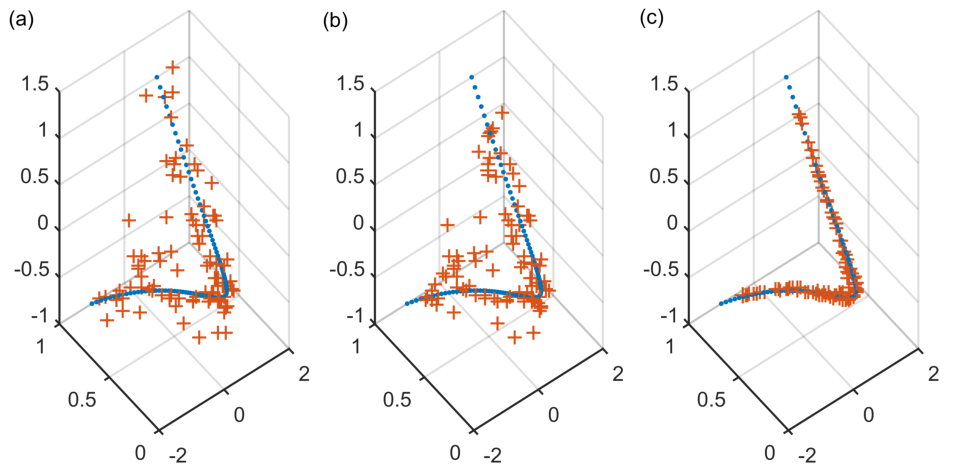}
\caption{An example of 3D data of one latent dimension: (a) data with sparse noise; (b) data recovered by RPCA; (c) data recovered by the proposed method. The clean data are shown by the blue points.}\label{Fig.syn_curve}
\end{figure}

To make the recovery problem meaningful, we propose the following assumption: 
\begin{assumption}\label{asp_spread}
(\textit{non-sparse component condition}) Given $f(\cdot):\mathbb{R}^r\rightarrow \mathbb{R}^d$ and $\bm{z}=[z_1,z_2,\cdots,z_r]^T$, for any decomposition (if exists) $f(\bm{z})=g(\bm{z_{|i}})+h(z_i)$, where $\bm{z}_{|i}=[z_1,\cdots,z_{i-1},z_{i+1},\cdots,z_r]^T$ and $h(z_i)=[h_1(z_i),h_2(z_i),\cdots,h_d(z_i)]^T$, one has $\textup{Pr}(h_j(z_i)\neq 0)=\mu_0$ $\forall$ $j=1,\dots,d$. 
\end{assumption}
Assumption \ref{asp_spread} indicates that, if $\bm{X}=\bm{X}_a+\bm{X}_b$ with $\textup{ldim}(\bm{X}_a)=r-1$ and $\textup{ldim}(\bm{X}_b)=1$, then $\Vert \bm{X}_b\Vert_0=\mu_0 dn$. Suppose $\bm{E}=\bm{E}_a+\bm{E}_b$ with $\textup{ldim}(\bm{E}_b)=1$ and $\textup{ldim}(\bm{X}-\bm{X}_b+\bm{E}_b)=r$, we should have $\Vert\bm{E}-\bm{E}_b+\bm{X}_b\Vert_0>\vert\bm{E}\vert_0$. It requires that $\Vert\bm{X}_b\Vert_0>\Vert\bm{E}_b\Vert_0$. Since $\Vert\bm{E}_b\Vert_0$ is at most $\Vert\bm{E}\Vert_0/d$, we have $\mu_0 dn>\Vert\bm{E}\Vert_0/d$. Denoting $\rho=\Vert \bm{E}\Vert_0/(dn)$ as the noise density, we have
\begin{equation}\label{Eq.rho_mu}
\rho<\mu_0 d,
\end{equation}
which always holds if $\mu_0>1/d$. It is found that when one component of $\bm{X}$ is highly sparse, the noise density of $\bm{E}$ should be low enough. With condition (\ref{Eq.rho_mu}), we give the following lemma:
\begin{lemma}\label{lem_rec}
$\bm{X}$ and $\bm{E}$ can be recovered if the following problem is solved:
\begin{equation}\label{Eq.minldim}
\min\limits_{\bm{X},\bm{E}}\textup{ldim}(\bm{X})+\lambda\Vert \bm{E}\Vert_0,\ s.t. \ \bm{X}+\bm{E}=\bm{M},
\end{equation}
where $r/(dn-\rho dn)<\lambda<(d-r)/(\rho dn)$.
\end{lemma}
\begin{proof}
Denote the optimal value of $\bm{X}$ and $\bm{E}$ by $\bm{X}^*$ and $\bm{E}^*$. Denote $J(\bm{X},\bm{E})=\textup{ldim}(\bm{X})+\lambda\Vert \bm{E}\Vert_0$. Then $J(\bm{X}^*,\bm{E}^*)=r+\lambda \rho dn$. Given an arbitrary $\bm{X}'=\bm{X}^*+\bm{\Delta}$ with $\bm{\Delta} \neq \bm{0}$, we denote $\bm{E}'=\bm{E}-\bm{\Delta}$. If $\textup{ldim}(\bm{X}')<r$, we have $J(\bm{X}',\bm{E}')_{min}=\lambda nd>J(\bm{X}^*,\bm{E}^*)$ provided that $r<\lambda (dn-\rho dn)$. If $\textup{ldim}(\bm{X}')<r$, $J(\bm{X}',\bm{E}')_{min}=d>J(\bm{X}^*,\bm{E}^*)$ provided that $\lambda<(d-r)/(\rho dn)$. $J(\bm{X}',\bm{E}')=J(\bm{X}^*,\bm{E}^*)$ only if $\bm{\Delta}=0$.
\vspace{-0.1cm}
\end{proof}
Though (\ref{Eq.minldim}) is intractable,  Lemma \ref{lem_rec} indicates the feasibility that $\bm{X}$ can be recovered when $f(\cdot)$ is nonlinear. If we can find a tractable relaxation of (\ref{Eq.minldim}) (especially for $\textup{ldim}(\bm{X})$), we can obtain $\bm{X}$ and $\bm{E}$.

In this paper, we give the following theorem:
\begin{theorem}\label{asp_kernel}
Suppose $\bm{X}\in\mathbb{R}^{d\times n}$ is given by Assumption \ref{assump_0} and denote $\phi(\bm{X})=[\phi(\bm{x}_1),\phi(\bm{x}_2),\cdots,\phi(\bm{x}_n)]$. Then there exists a smooth nonlinear function $\phi:\mathbb{R}^d\rightarrow \mathcal{F}^\kappa$ that maps the columns of $\bm{X}$ into a high-dimensional (possibly infinite) feature space such that $\phi(\bm{X})\in\mathbb{R}^{\kappa\times n}$ is exactly or approximately of low-rank, provided that $n$ is sufficiently large.
\end{theorem}
\begin{proof}
We denote $\phi(\bm{x})=\phi(f(\bm{z}))\triangleq\psi(\bm{z})$. Then $\psi:\mathbb{R}^r\rightarrow \mathcal{F}^\kappa$ is a smooth nonlinear mapping and hence has Taylor expansion convergent at least locally. The Taylor expansion of $\psi(\bm{z})$ at $\bm{\upsilon}$ is
\begin{equation}
\psi(\bm{z})=\sum_{\vert \alpha \vert\geq 0}\dfrac{(\bm{z}-\bm{\upsilon})^{\alpha}}{\alpha !}(\partial^{\alpha} \phi)(\bm{\upsilon}),
\end{equation}
where we have used the multi-index notation.
The $t$-th order Taylor approximation of $\psi(\bm{z})$ is
\begin{equation}
\psi(\bm{z})\approx \bm{\Theta}_0+\bm{\Theta}_1\tilde{\bm{z}}^{(1)}+\cdots+\bm{\Theta}_t\tilde{\bm{z}}^{(t)},
\end{equation}
where $\bm{\Theta}_0\in\mathbb{R}^{\kappa\times 1}$. For $k=1,\cdots,t$, $\bm{\Theta}_k\in\mathbb{R}^{\kappa\times d_k}$, $\tilde{\bm{z}}^{(k)}\in\mathbb{R}^{d_k\times 1}$ consists of the $k$-combinations (product) of the elements in $\bm{z}-\bm{\upsilon}$, and $d_k=\binom{r+k-1}{k}$. Denote $\psi(\bm{Z})=[\psi(\bm{z}_1),\psi(\bm{z}_2),\cdots,\psi(\bm{z}_n)]$, $\bm{\Theta}=[\bm{\Theta}_0,\bm{\Theta}_1,\cdots,\bm{\Theta}_t]$, and $\bar{\bm{Z}}=[\bar{\bm{z}}_1,\bar{\bm{z}}_2,\cdots,\bar{\bm{z}}_n]$, where $\bar{\bm{z}}=[1;\tilde{\bm{z}}^{(1)};\cdots;\tilde{\bm{z}}^{(t)}]$. Then we have $\bm{\Theta}\in\mathbb{R}^{\kappa\times \bar{r}}$, $\bar{\bm{Z}}\in\mathbb{R}^{\bar{r}\times n}$ and $\psi(\bm{Z})\approx\bm{\Theta}\bar{\bm{Z}}$, where $\bar{r}=1+\sum_{k=1}^t\binom{r+k-1}{k}=\binom{r+t}{t}$. It means
\begin{equation}\label{Eq.rank_phiX}
\textup{rank}(\phi(\bm{X}))=\textup{rank}(\psi(\bm{Z}))\approx\binom{r+t}{t},
\end{equation}
provided that $n$ and $\kappa$ are large. Therefore, $\phi(\bm{X})$ is exactly of low-rank when the Taylor residuals vanish and approximately of low-rank otherwise. For example, when $d=10$, $r=2$, $n=100$, and $t=4$, we have $\textup{rank}(\phi(\bm{X}))\approx15\ll 100$, although $\bm{X}$ could be of full-rank.
\end{proof}
Suppose that $\bm{X}$ is corrupted by sparse noises, e.g., $\hat{\bm{X}}=\bm{X}+\bm{E}$ and $\bm{E}\neq \bm{0}$, the latent dimension of $\hat{\bm{X}}$ will be higher than that of $\bm{X}$, e.g. $r'>r$. According to (\ref{Eq.rank_phiX}), we have
\begin{equation}\label{Eq.rxrx}
\textup{rank}(\phi(\hat{\bm{X}}))>\textup{rank}(\phi(\bm{X})).
\end{equation}
In fact, (\ref{Eq.rank_phiX}) indicates that the rank of $\phi(\bm{X})$ is a monotone increasing function of the latent dimension of $\bm{X}$. Therefore, minimizing the rank of $\phi(\bm{X})$ amounts to minimizing the latent dimension of $\bm{X}$. Now problem (\ref{Eq.minldim}) can be rewritten as
\begin{equation}\label{Eq.RKPCA_0}
\min\limits_{\bm{X},\bm{E}}\textup{rank}(\phi(\bm{X}))+\lambda\Vert \bm{E}\Vert_0,\ s.t. \ \bm{X}+\bm{E}=\bm{M},
\end{equation}
and further approximated by
\begin{equation}\label{Eq.RKPCA_0.5}
\min\limits_{\bm{X},\bm{E}}\Vert \phi(\bm{X})\Vert_\ast+\lambda\Vert \bm{E}\Vert_1,\ s.t. \ \bm{X}+\bm{E}=\bm{M},
\end{equation}
where $\Vert \phi(\bm{X})\Vert_\ast=\sum_{i=i}^{min(\kappa,n)}\sigma_i$ and $\sigma_i$ is the $i$th singular value of $\phi(\bm{X})$. We also have
\begin{equation}\label{Eq.SpNormT}
\Vert \phi(\bm{X})\Vert_\ast=\textup{Tr}((\phi(\bm{X})^T\phi(\bm{X}))^{1/2}),
\end{equation}
where $\textup{Tr}(\cdot)$ denotes matrix trace. Denoting the SVD of $\phi(\bm{X})$ by $\phi(\bm{X})=\bm{U}\bm{S}\bm{V}^T$, equation (\ref{Eq.SpNormT}) can be proved as following:
\begin{equation}
\begin{aligned}
&\textup{Tr}((\phi(\bm{X})^T\phi(\bm{X}))^{1/2})=\textup{Tr}((\bm{V}\bm{S}^2\bm{V}^T)^{1/2})\\
=&\textup{Tr}(\bm{V}(\bm{S}^2)^{1/2}\bm{V}^T)=\textup{Tr}(\bm{S})=\Vert \phi(\bm{X})\Vert_\ast.
\end{aligned}
\end{equation}
Substituting (\ref{Eq.SpNormT}) into problem (\ref{Eq.RKPCA_0}), we get
\begin{equation}\label{Eq.RKPCA_1}
\min\limits_{\bm{X},\bm{E}}\textup{Tr}((\phi(\bm{X})^T\phi(\bm{X}))^{1/2})+\lambda\Vert \bm{E}\Vert_1,\ s.t.\ \bm{X}+\bm{E}=\bm{M}.
\end{equation}
In the $n\times n$ Gram matrix $\phi(\bm{X})^T\phi(\bm{X})$, each element $\phi(\bm{x}_i)^T\phi(\bm{x}_j)=\langle\phi(\bm{x}_i),\phi(\bm{x}_j)\rangle$ can be directly obtained by kernel representations \cite{KPCA1998}, i.e.,
\begin{equation}\label{Eq.KerMap}
\langle\phi(\bm{x}_i),\phi(\bm{x}_j)\rangle=k(\bm{x}_i,\bm{x}_j),
\end{equation}
without carrying out $\phi(\cdot)$ explicitly. In (\ref{Eq.KerMap}), $k(\cdot,\cdot)$ is a kernel function satisfying Mercer's theorem \cite{KPCA1998}. With a certain kernel function $k(\cdot,\cdot)$, the nonlinear mapping $\phi(\cdot)$ and the feature space $\mathcal{F}$ will be implicitly determined. The most widely-used kernel function is the radial basis function (RBF) kernel
\begin{equation}\label{Eq.K_RBF}
k(\bm{x},\bm{y})=\exp\left(-\Vert \bm{x}-\bm{y}\Vert^2/(2\sigma^2)\right),
\end{equation}
where $\sigma$ is a free parameter controlling the smoothness degree of the kernel. We give the following lemma:
\begin{lemma}
The features given by RBF kernel are Taylor features of the input data and the feature dimensionality is infinite \cite{DBLP:journals/corr/abs-1109-4603}.
\end{lemma}
\begin{proof}
RBF kernel can be written as
\begin{equation}\label{Eq.K_RBF_1}
\begin{aligned}
k(\bm{x},\bm{y})&=\exp\left(-\dfrac{1}{2\sigma^2}\Vert \bm{x}-\bm{y}\Vert^2\right)\\
&=\exp\left(-\dfrac{1}{2\sigma^2}\left(\Vert \bm{x}\Vert^2+\Vert \bm{y}\Vert^2-2\langle \bm{x},\bm{y}\rangle\right)\right)\\
&=C\exp\left(\dfrac{1}{\sigma^2}\langle \bm{x},\bm{y}\rangle\right)\\
&=C\sum_{n=0}^\infty\dfrac{\langle \bm{x},\bm{y}\rangle^n}{\sigma^{2n}n!}
\end{aligned}
\end{equation}
where $C=\exp\left(-\dfrac{1}{2\sigma^2}\left(\Vert \bm{x}\Vert^2+\Vert \bm{y}\Vert^2\right)\right)$. Because $\langle \bm{x},\bm{y}\rangle^n$ is $n$-th order polynomial kernel, (\ref{Eq.K_RBF_1}) means that RBF kernel is a weighted sum of polynomial kernel of different orders from $0$ to $\infty$. Then the features given by RBF kernel are
\begin{equation}
\begin{aligned}
\phi(\bm{x})=&[c_0,c_1x_1,\cdots,c_dx_d,\cdots,c_{ij}x_ix_j,\cdots,\\
&c_{ijk}x_ix_jx_k,\cdots,c_{i\cdots k}x_i\cdots x_k,\cdots]^T,
\end{aligned}
\end{equation}
which is the Taylor features of $\bm{x}$. The dimensionality of $\phi(\bm{x})$ is infinity.
\end{proof}
As $\bm{x}=f(\bm{z})$ can be approximated by Taylor expansion, the features given by RBF kernel are also Taylor features of the latent variable $\bm{z}$, e.g.
\begin{equation}\label{poly_z}
\begin{aligned}
\phi(\bm{x})=&[b_0,b_1z_1,\cdots,b_rz_r,\cdots,b_{ij}z_iz_j,\cdots,\\
&b_{ijk}z_iz_jz_k,\cdots,b_{i\cdots k}z_i\cdots z_k,\cdots]^T.
\end{aligned}
\end{equation}
That is why we use Taylor series to estimate the rank of $\phi(\bm{X})$ in Theorem \ref{asp_kernel}.  

With the kernel matrix $\bm{K}=[k(\bm{x}_i,\bm{x}_j)]_{n\times n}=[\phi(\bm{x}_i)^T\phi(\bm{x}_j)]_{n\times n}$, problem (\ref{Eq.RKPCA_1}) can be rewritten as
\begin{equation}\label{Eq.RKPCA_2}
\min\limits_{\bm{X},\bm{E}}\textup{Tr}(\bm{K}^{1/2})+\lambda\Vert \bm{E}\Vert_1,\ s.t.\ \bm{X}+\bm{E}=\bm{M}.
\end{equation}
According to Lemma \ref{lem_rec}, since $\textup{Tr}(\bm{K}^{1/2})$ and $\Vert \bm{E}\Vert_1$ are the relaxations of $\textup{ldim}(\bm{X})$ and $\Vert \bm{E}\Vert_0$ in (\ref{Eq.minldim}), $\bm{X}$ and $\bm{E}$ can be recovered through solving (\ref{Eq.RKPCA_2}) with some appropriate $\lambda$. We call the proposed method robust kernel PCA (RKPCA), which is able to handle nonlinear data and sparse noises simultaneously. Moreover, since the nonlinear data often form high-rank matrices, RKPCA is able to recover $\bm{X}$ and $\bm{E}$ even if $\bm{X}$ is of high-rank or full-rank. On the contrary, RPCA cannot effectively recover high-rank matrices and full-rank matrices.

The parameter $\lambda$ is crucial to RKPCA. In (\ref{Eq.RKPCA_2}), $\textup{Tr}(\bm{K}^{1/2})$ and $\Vert \bm{E}\Vert_1$ may have different orders of magnitude. When RBF kernel is used, we have
\begin{equation}
\sqrt{n}<\textup{Tr}(\bm{K}^{1/2})<n.
\end{equation}
To balance the two terms in (\ref{Eq.RKPCA_2}) or (\ref{Eq.RKPCA_0.5}), $\lambda$ should be determined as
\begin{equation}
\lambda=n\lambda_0/\Vert \bm{M}\Vert_1,
\end{equation} 
where the value of $\lambda_0$ can be chosen around 0.5.

It is worth noting that in RKPCA the nuclear norm $\Vert \phi(\bm{X})\Vert_\ast$ is a special case (when $p=1$) of the Schatten $p$-norm $\Vert \phi(\bm{X})\Vert_{Sp}=\left(\sum_{i=1}^{min(\kappa,n)}(\sigma_i)^p\right)^{1/p}$ ($0<p\leq 1$), which is a nonconvex relaxation of matrix rank. We have $\Vert \phi(\bm{X})\Vert_{Sp}^p=\textup{Tr}(\bm{K}^{p/2})$ and get the following generalized form of RKPCA
\begin{equation}\label{Eq.RKPCA_sp}
\min\limits_{\bm{X},\bm{E}}\textup{Tr}(\bm{K}^{p/2})+\lambda\Vert \bm{E}\Vert_1,\ s.t.\ \bm{X}+\bm{E}=\bm{M}.
\end{equation}

A geometrical interpretation is as follow. In terms of the RBF kernel, if the distance between two data points $\bm{x}_i$ and $\bm{x}_j$ is large, $k(\bm{x}_i,\bm{x}_j)$ will be very small. It means that the element $\phi^T(\bm{x}_i)\phi(\bm{x}_j)=k(\bm{x}_i,\bm{x}_j)$ makes little contribution to the objective function of RKPCA. Therefore RKPCA exploits more local information than global information. That  is why RKPCA can handle nonlinearity. When the RBF kernel is replaced with linear kernel $k(\bm{x}_i,\bm{x}_j)=\phi^T(\bm{x}_i)\phi(\bm{x}_j)=\bm{x}_i^T\bm{x}_j$ and $p=1$, RKPCA is identical to RPCA. Because linear kernel cannot to recognize local information, RPCA is unable to handle nonlinearity.

\subsection{Theoretical guarantee of effective recovery}
In the following content, we will provide theoretical guarantee in terms of noise density for RKPCA. To determine $\bm{X}$ from $\bm{M}$, the number of clean entries should be larger than the number of degrees of freedom of $\bm{X}$, which is the minimum number of parameters to fix $\bm{X}$. First, because the latent dimension of $\bm{X}$ is $r$, to determine one column $\bm{x}\in\mathbb{R}^d$, we need to observe at least $r$ clean entries (denoted by $\bm{x}^o$) to form a set of bases. Therefore, for all columns of $\bm{X}$, we need $nr$ parameters. Second, in each column of $\bm{X}$, the remaining $d-r$ entries (denoted by $\bm{x}^{\bar{o}}$) can be reconstructed as 
\begin{equation}
\bm{x}^{\bar{o}}=g(\bm{x}^o),
\end{equation}
where $g:\mathbb{R}^r\rightarrow \mathbb{R}^{d-r}$ is an unknown nonlinear mapping. According to (\ref{poly_z}), we should approximate $g$ by polynomials. The Taylor series of $g$ at $\bm{x}^o$ is given by
\begin{equation}\label{Eq.taylor_1}
g(\bm{x}^o)=\sum_{\vert \alpha \vert\geq 0}\dfrac{(\bm{x}^o-\bm{v})^{\alpha}}{\alpha !}(\partial^{\alpha} g)(\bm{v}).
\end{equation}
Then the $t$-th order Taylor approximation of $g(\bm{x}^o)$ is 
\begin{equation}\label{Eq.taylor_2}
g(\bm{x}^o)\approx \bm{C}_0+\bm{C}_1\bm{\chi}_1+\cdots+\bm{C}_t\bm{\chi}_t,
\end{equation}
where $\bm{C}_0\in\mathbb{R}^{(d-r)\times 1}$. For $k=1,\cdots,t$, $\bm{C}_k\in\mathbb{R}^{(d-r)\times d_k}$, $\bm{\chi}_k\in\mathbb{R}^{d_k\times 1}$ consists of the $k$-combinations (product) of the elements in $\bm{x}^o-\bm{v}$, and $d_k=\binom{r+k-1}{k}$. Hence, to approximate $g$, we need to determine $\lbrace\bm{C}_0,\bm{C}_1,\cdots,\bm{C}_t\rbrace$, which has $(d-r)\times (1+\sum_{k=1}^t\binom{r+k-1}{k})=(d-r)\times \binom{r+t}{t}$ parameters. Third, we need at least one more entry of each column of $\bm{X}$ to validate the positions of the clean entries and noisy entries and hence require $n$ parameters for all columns of $\bm{X}$. Summarizing the numbers of the parameters required in the three steps, we get the number of degrees of freedom of $\bm{X}$ as $(r+1)n+(d-r)\times \binom{r+t}{t}$. Therefore, the noise density should meet the following condition
\begin{equation}\label{Eq.rho_dof}
\rho<1-\dfrac{r+1}{d}-\dfrac{d-r}{dn}\times \binom{r+t}{t}.
\end{equation}
Combining (\ref{Eq.rho_mu}) and (\ref{Eq.rho_dof}), the following lemma holds:
\begin{lemma}\label{lem.rho_bound}
Suppose $\bm{X}\in\mathbb{R}^{d\times n}$ is given by Assumption \ref{assump_0} and $\bm{M}=\bm{X}+\bm{E}$, it is possible to approximately recover $\bm{X}$ from $\bm{M}$ if the noise density of $\bm{E}$ meets $\rho<\min\lbrace\mu_0 d,1-(r+1)/d-(d-r)\binom{r+t}{t}/(dn)\rbrace$.
\end{lemma}

In Lemma \ref{lem.rho_bound}, the parameter $t$ determines the recovery error, which is on the order of $\partial^{t+1}g/(t+1)!$. When $g$ can be perfectly approximated by its $t$-th order Taylor expansion, the recovery error is zero. However Lemma \ref{lem.rho_bound} is a necessary condition for recovering $\bm{X}$. For instance, suppose 
\begin{equation}
\left\lbrace
\begin{aligned}
&x_1=f_1(z_1),&&x_2=f_2(z_1),\\
&x_3=f_3(z_1,z_2),&&x_4=f_4(z_1,z_2),
\end{aligned}
\right.
\end{equation}
when $\lbrace x_3,x_4\rbrace$ are corrupted and $\lbrace x_1,x_2\rbrace$ are clean, it is impossible to recover $\lbrace x_3,x_4\rbrace$ because the information of $z_2$ is lost. Therefore the noisy density $\rho$ should be lower such that the information of $z_1$ and $z_2$ can be preserved. Let $\bm{J}$ be the Jacobian matrix of $f$, i.e.,
\begin{equation}
\bm{J}=\dfrac{\partial \bm{x}}{\partial \bm{z}}=\left[
\begin{matrix}
\tfrac{\partial{x_1}}{\partial{z_1}} & \tfrac{\partial{x_1}}{\partial{z_2}} & \cdots &\tfrac{\partial{x_1}}{\partial{z_r}}\\
\tfrac{\partial{x_2}}{\partial{z_1}} & \tfrac{\partial{x_2}}{\partial{z_2}} & \cdots &\tfrac{\partial{x_2}}{\partial{z_r}}\\
\vdots & \vdots & \ddots & \vdots\\
\tfrac{\partial{x_d}}{\partial{z_1}} & \tfrac{\partial{x_d}}{\partial{z_2}} & \cdots &\tfrac{\partial{x_d}}{\partial{z_r}}\\
\end{matrix}
\right],
\end{equation}
and $\bm{J}_{ij}=\tfrac{\partial{x_i}}{\partial{z_j}}$. Define 
\begin{equation}
\delta=\textup{Pr}(\bm{J}_{ij}\neq 0)\ \textup{or}\ \delta=\textup{Pr}(\vert\bm{J}_{ij}\vert>\epsilon),
\end{equation}
where $\epsilon$ is a small positive constant. $\delta$ measures the non-sparsity of $\bm{J}$. We give the following lemma:
\begin{lemma}\label{lem_incoherence}
Suppose that $\mu r$ elements of each column of $\bm{M}$ are clean, where $1\leq\mu\leq d/r$. Then the information of all elements of $\bm{Z}$ can be preserved with high probability, provided that $\mu$ is large enough. 
\end{lemma}
\begin{proof}
Let $\lbrace x_i\rbrace_{i\in \mathbb{S}}$ be a subset of $\bm{x}$, where $\mathbb{S}$ consists of $\mu r$ distinct elements of $\lbrace 1,2,\cdots, d\rbrace$. We obtain
\begin{equation}
\textup{Pr}(\tfrac{\partial x_i}{\partial z_1}=0, \forall i\in\mathbb{S})=(1-\delta)^{\mu r}.
\end{equation}
It follows that 
\begin{equation}
\textup{Pr}(\tfrac{\partial x_i}{\partial z_1}=0\cup\cdots\cup\tfrac{\partial x_i}{\partial z_r}=0, \forall i\in\mathbb{S})=r(1-\delta)^{\mu r}.
\end{equation}
Then the information of all elements of $\bm{Z}$ can be preserved with probability 
\begin{equation}
\rho_0=1-nr(1-\delta)^{\mu r}.
\end{equation}
To ensure a high $\rho_0$, $\mu$ should be large enough and $\mu=\tfrac{1}{r}\log_{1-\delta}\tfrac{1-\rho_0}{nr}$.  For example, let $r=5$, $n=100$, and $\delta=0.6$, we have $\rho_0=0.9467$ if $\mu=2$ and $\rho_0=0.9995$ if $\mu=3$.
\end{proof}
It is worth noting that Lemma \ref{lem_incoherence} is similar to the incoherence condition in RPCA \cite{RPCA}. In RPCA, the incoherence property is defined on the singular vectors of $\bm{X}$ and measures the non-spiky property of the singular vectors. In our paper, $\bm{X}$ is given by Assumption \ref{assump_0} and cannot be formulated by singular value decomposition. Hence the incoherence property of RPCA is inapplicable to our model. In fact, Lemma \ref{lem_incoherence} provides a condition for that the Jacobian matrix defined by the clean elements of $\bm{x}$ is of full-rank, which ensures that $f$ is invertible at least locally \cite{clarke1976inverse} and hence the corrupted elements of $\bm{x}$ can be recovered.

In accordance with RPCA \cite{RPCA}, we also assume that the locations of nonzero entries of $\bm{E}$ are independently determined by Bernoulli distribution, i.e., for all $(i,j)$, $\textup{Pr}(\bm{E}_{ij}\neq 0)=\rho$ and $\textup{Pr}(\bm{E}_{ij}=0)=1-\rho$. The following theorem shows that $\bm{X}$ can be recovered with high probability.
\begin{theorem}\label{theorem_sufficient}
Suppose $\bm{X}\in\mathbb{R}^{d\times n}$ is given by Assumption \ref{assump_0} and $\bm{M}=\bm{X}+\bm{E}$, with probability at least $(1-nr(1-\delta)^{\mu r})(1-n^{1-c})$, $\bm{X}$ can be recovered with error $O(\partial^{t+1}g/(t+1)!)$ from $\bm{M}$ if the noise density of $\bm{E}$ satisfies
\begin{equation}
\rho<1-\dfrac{\max\lbrace c\mu_1rn\log n,(r+1)n+(d-r)\binom{r+t}{t}\rbrace}{dn},
\end{equation}
where $c$ is a numerical constant and $\mu_1=\max\lbrace 1+\tfrac{1}{r},\mu\rbrace$.
\end{theorem}
\begin{proof}
According to Lemma \ref{lem_incoherence} and the number of degrees of freedom of $\bm{X}$ with $O(\partial^{t+1}g/(t+1)!)$ residuals, the number of clean entries of each column of $\bm{X}$ should be at least $\max\lbrace r+1,\mu r\rbrace\triangleq \mu_1r$. With the Bernoulli model of clean entry locations, to ensure $\mu_1r$ clean entries for every column of $\bm{X}$ with probability 
\begin{equation}
\rho_1=1-n^{1-c}, 
\end{equation}
the number of clean entries of $\bm{X}$ should be at least $c\mu_1rn\log n$ (the coupon collector's problem\footnote{In the coupon collector's problem \cite{CCP1997}, through $cn\log n$ trials, one can collect $n$ different coupons with probability at least $1-n^{1-c}$.}\cite{CCP1997,CandesRecht2009}), where $c>1$ is a numerical constant. Combining Lemma \ref{lem.rho_bound}, when the number of clean entries of $\bm{X}$ is $\max\lbrace (1-\mu_0d)dn, (r+1)n+(d-r)\binom{r+t}{t}, c\mu_1rn\log n\rbrace$, $\bm{X}$ can be approximately recovered with probability at least $\rho_0\rho_1$. Since $\mu_0>1/d$ often holds easily, the condition becomes $\max\lbrace (r+1)n+(d-r)\binom{r+t}{t}, c\mu_1rn\log n\rbrace$. It indicates that when the noise density of $\bm{E}$ meets $\rho<1-\max\lbrace c\mu_1rn\log n,(r+1)n+(d-r)\binom{r+t}{t}\rbrace/(dn)$, $\bm{X}$ can be approximately recovered with probability at least $\rho_0\rho_1$. The probability will be high if $c$ and $\mu$ are large enough. In addition, larger $t$ leads to lower recover error, provided that the condition of $\rho$ holds.
\end{proof}

Theorem \ref{theorem_sufficient} indicates that RKPCA can recover $\bm{X}$ if $\rho<1-C_1\textup{ldim}(\bm{X})/d$, where $C_1$ is a numerical constant, provided that $n$ is large enough. In Theorem \ref{theorem_sufficient}, when $\textup{rank}(\bm{X})=\textup{ldim}(\bm{X})=r$, it indicates that RPCA can recover $\bm{X}$ if $\rho<1-C_2\textup{rank}(\bm{X})/d$, where $C_2$ is a numerical constant, provided that $n$ is large enough. We also have $C_1>C_2$ but $C_1\approx C_2$ when $n$ is large enough. When $\bm{X}$ has nonlinear structures, $\textup{rank}(\bm{X})\gg \textup{ldim}(\bm{X})$ and then $1-C_1\textup{ldim}(\bm{X})/d>1-C_2\textup{rank}(\bm{X})/d$. It means that RKPCA can outperform RPCA in recovering high-rank and full-rank matrices.

More generally, the columns of $\bm{X}\in\mathbb{R}^{d\times n}$ could be drawn from multiple nonlinear latent variable models, e.g.
\begin{equation}\label{Eq.X_MS}
\bm{X}=[f_1(\bm{Z}_1),f_2(\bm{Z}_2),\cdots,f_k(\bm{Z}_k)],
\end{equation}
where, for $j=1,2,\cdots,k$, $f_j:\mathbb{R}^{r_j}\rightarrow\mathbb{R}^{d}$ is a nonlinear mapping performed on the columns of $\bm{Z}_j$ separately, $\bm{Z}_j\in\mathbb{R}^{r_j\times n_j}$, and $\sum_{j=1}^kn_j=n$. We can also say that $\bm{X}$ is drawn from a union ($k$) of subspaces nonlinearly. $\bm{X}$ is often of full-rank. Let $\bm{Z}=[\bm{Z_1},\bm{Z_2},\cdots,\bm{Z}_k]$ and denote $\bm{s}\in\mathbb{R}^{1\times n}$ as the labels of the columns of $\bm{X}$, then we reformulate (\ref{Eq.X_MS}) as
\begin{equation}
\bm{X}=f\left(
\begin{matrix}
\bm{Z}\\ \bm{s}
\end{matrix} 
\right)\triangleq f(\bar{\bm{Z}}).
\end{equation}
It means that such a matrix $\bm{X}$ is still in according with the assumption of RKPCA. The difference is that the latent dimension increased and the nonlinear mapping $f$ is more complex. According to (\ref{Eq.rank_phiX}), we have 
\begin{equation}\label{Eq.rank_phiX_k}
\textup{rank}(\phi(\bm{X}))\approx k\binom{r+t}{t},
\end{equation}
where we have assumed $r_1=\cdots=r_k=r$ for convenience and $n_1,\cdots,n_k$ are sufficiently large. Therefore, $\bm{X}$ can be recovered by RKPCA when $\rho<1- O\left(\textup{ldim}(\bm{X})/d\right)$ provided that $n$ is large enough.

\subsection{Optimization via ADMM plus backtracking line search (ADMM+BTLS)}
In RKPCA formulated by (\ref{Eq.RKPCA_2}), the two blocks of unknown variables $\bm{X}$ and $\bm{E}$ are separable. The second term $\lambda\Vert \bm{E}\Vert_1$ and the constraint are convex. The first term $\textup{Tr}(\bm{K}^{1/2})$ could be nonconvex if a nonlinear kernel such as RBF kernel is used. We propose to solve (\ref{Eq.RKPCA_2}) via alternating direction method of multipliers (ADMM) \cite{ADMM,doi:10.1137/140990309,Deng2016}. The augmented Lagrange function of (\ref{Eq.RKPCA_2}) is given by
\begin{equation}
\begin{aligned}
&\mathcal{L}(\bm{X},\bm{E},\bm{Q})=\textup{Tr}(\bm{K}^{1/2})+\lambda\Vert \bm{E}\Vert_1\\
&+\langle \bm{X}+\bm{E}-\bm{M}, \bm{Q}\rangle+\dfrac{\mu}{2}\Vert \bm{X}+\bm{E}-\bm{M}\Vert_F^2,
\end{aligned}
\end{equation}
where $\bm{Q}\in\mathbb{R}^{d\times n}$ is the matrix of Lagrange multipliers and $\mu>0$ is a penalty parameter. $\mathcal{L}(\bm{X},\bm{E},\bm{Q})$ can be minimized over $\bm{X}$ and $\bm{E}$ one after another. 

First, fix $\bm{E}$ and solve
\begin{equation}\label{Eq.solve_X}
\min\limits_{\bm{X}} \textup{Tr}(\bm{K}^{1/2})+\dfrac{\mu}{2}\Vert \bm{X}+\bm{E}-\bm{M}+\bm{Q}/\mu\Vert_F^2.
\end{equation}
Because of the kernel matrix $\bm{K}$, problem (\ref{Eq.solve_X}) has no closed-form solution. Denoting 
\begin{equation}\label{Eq.J}
\mathcal{J}=\textup{Tr}(\bm{K}^{1/2})+\dfrac{\mu}{2}\Vert \bm{X}+\bm{E}-\bm{M}+\bm{Q}/\mu\Vert_F^2,
\end{equation}
we propose to solve (\ref{Eq.solve_X}) with gradient descent, i.e.,
\begin{equation}\label{Eq.solve_X_updata}
\bm{X}\leftarrow \bm{X}-\eta \dfrac{\partial \mathcal{J}}{\partial \bm{X}}.
\end{equation}
where $\eta$ is the step size and will be discussed later. Denoting $\mathcal{J}_1=\textup{Tr}(\bm{K}^{1/2})$ and $\mathcal{J}_2=\dfrac{\mu}{2}\Vert \bm{X}+\bm{E}-\bm{M}+\bm{Q}/\mu\Vert_F^2$, we have $\partial \mathcal{J}_2/\partial \bm{X}=\mu(\bm{X}+\bm{E}-\bm{M}+\bm{Q}/\mu)$. $\partial \mathcal{J}_1/\partial \bm{X}$ can be computed by the chain rule
\begin{equation}\label{Eq.dLdX}
\dfrac{\partial{\mathcal{J}_1}}{\partial{\bm{X}}}=\sum_{i=1}^n\sum_{j=1}^n\dfrac{\partial{\mathcal{J}_1}}{\partial{K_{ij}}}\dfrac{\partial{K_{ij}}}{\partial{\bm{X}}},
\end{equation}
where
\begin{equation}
\dfrac{\partial \mathcal{J}_1}{\partial \bm{K}}=\dfrac{1}{2}\bm{K}^{-\tfrac{1}{2}}.
\end{equation}

In the second step of minimizing $\mathcal{L}(\bm{X},\bm{E},\bm{Q})$, we fix $\bm{X}$ and solve
\begin{equation}\label{Eq.solve_E}
\min\limits_{\bm{E}} \lambda\Vert \bm{E}\Vert_1+\dfrac{\mu}{2}\Vert \bm{X}+\bm{E}-\bm{M}+\bm{Q}/\mu\Vert_F^2.
\end{equation}
The closed-form solution of (\ref{Eq.solve_E}) is given as
\begin{equation}\label{Eq.solve_E_result}
\bm{E}=\Theta_{\lambda/\mu}(\bm{M}-\bm{X}-\bm{Q}/\mu),
\end{equation}
where $\Theta(\cdot)$ is the element-wise soft thresholding operator \cite{ProximalA} defined as
\begin{equation}
\Theta_\tau(u)=:\dfrac{u}{\vert u\vert}max\lbrace\vert u\vert-\tau,0\rbrace.
\end{equation}
Finally, the matrix of Lagrange multipliers is updated by
\begin{equation}
\bm{Q}\leftarrow \bm{Q}+\mu(\bm{X}+\bm{E}-\bm{M}).
\end{equation}
The optimization frame for RKPCA is summarized in Algorithm \ref{alg.RKPCA}. The convergence condition is
\begin{equation}
\dfrac{\mathcal{L}^{(t-1)}(\bm{X},\bm{E},\bm{Q})-\mathcal{L}^{(t)}(\bm{X},\bm{E},\bm{Q})}{\mathcal{L}^{(t-1)}(\bm{X},\bm{E},\bm{Q})}<\varepsilon
\end{equation}
where $\varepsilon$ is a small number such as $10^{-6}$.

\renewcommand{\algorithmicrequire}{\textbf{Input:}}
\renewcommand{\algorithmicensure}{\textbf{Output:}}
\begin{algorithm}[h]
\caption{Solve RKPCA with ADMM}
\label{alg.RKPCA}
\begin{algorithmic}[1] 
\Require
$\bm{M}$, $k(\cdot,\cdot)$, $\lambda$, $\mu$, $t_{max}$.
\State \textbf{initialize:} $\bm{X}^{(0)}=\bm{M}$, $\bm{E}^{(0)}=0$, $t=0$.
\Repeat
\State $\bm{X}^{(t+1)}= \bm{X}^{(t)}-\eta (\dfrac{\partial \mathcal{J}_1}{\partial \bm{X}^{(t)}}+\dfrac{\partial \mathcal{J}_2}{\partial \bm{X}^{(t)}})$
\State $\bm{E}^{(t+1)}=\Theta_{\lambda/\mu}(\bm{M}-\bm{X}^{(t+1)}-\bm{Q}/\mu)$
\State $\bm{Q}\leftarrow \bm{Q}+\mu(\bm{X}^{(t+1)}+\bm{E}^{(t+1)}-\bm{M})$
\State $t=t+1$
\Until{converged or $t=t_{max}$}
\Ensure $\bm{X}=\bm{X}^{(t)}$, $\bm{E}=\bm{E}^{(t)}$
\end{algorithmic}
\end{algorithm}

In Algorithm \ref{alg.RKPCA}, a large $\eta$ will increase the convergence speed but may make (\ref{Eq.solve_X}) diverge. It is known that the subproblem (\ref{Eq.solve_X}) will converge when $\eta<1/L_\mathcal{J}$, where $L_\mathcal{J}$ is the Lipschitz constant of the gradient of $\mathcal{J}$ in (\ref{Eq.J}) \cite{doi:10.1137/140990309,Deng2016}. The reason is that the gradient descent is the solution of the $L_\mathcal{J}$ quadratic approximation, which is strongly convex \cite{ADMM}. In kernel methods such as SVM, KPCA and Gaussian processes, RBF kernel usually outperforms polynomial kernel. Similarly, we find that in RKPCA, RBF kernel is more effective than polynomial kernel. Hence, in this study, we mainly focus on RBF kernel. As suggested in a lot of work of kernel methods, the parameter $\sigma$ can be chosen around the average of pair-wise distance of all data points \cite{chapelle2005semi}, i.e., 
\begin{equation}\label{Eq.kerpar}
\sigma=\dfrac{\beta}{n^2}\sum_{i=1}^n\sum_{j=1}^n\Vert \bm{x}_i-\bm{x}_j\Vert,
\end{equation}
where $\beta$ can be chosen from $\lbrace 0.5,1,1.5,2\rbrace$.
With RBF kernel, the gradient $\partial \mathcal{J}_1/\partial \bm{X}$ can be computed as
\begin{equation}\label{Eq.J1}
\dfrac{\partial \mathcal{J}_1}{\partial \bm{X}}=\dfrac{2}{\sigma^2}(\bm{X}\bm{H}-\bm{X}\odot (\bm{B}\bm{H})),
\end{equation}
where $\bm{H}=\dfrac{\partial \mathcal{J}_1}{\partial \bm{K}}\odot \bm{K}=\dfrac{1}{2}\bm{K}^{-1/2}\odot \bm{K}$, $\bm{B}$ is a $d\times n$ matrix consisting of $1$s, and $\odot$ denotes the Hadamard product. As can be seen, it is very difficult to obtain an available Lipschitz constant of the gradient because of the presence of kernel matrix. However, we can use inexact line search to find a good step size $\eta$ that meets the sufficient decrease condition (Armijo-Goldstein inequality, Chapter 3 in \cite{nocedal2006nonlinear})
\begin{equation}
\mathcal{J}(\bm{X}-\eta \dfrac{\partial \mathcal{J}}{\partial \bm{X}})<\mathcal{J}(\bm{X})-\gamma\eta\Vert \dfrac{\partial \mathcal{J}}{\partial \bm{X}}\Vert_F^2,
\end{equation}
where $0<\gamma<1$. The procedures of backtracking line search for $\eta$ are shown in Algorithm \ref{alg.LineSearch}. With the obtained $\eta$, the step 3 in Algorithm \ref{alg.RKPCA} will be non-expansive and the augmented Lagrange function will decrease sufficiently. 

\begin{algorithm}[h]
\caption{Backtracking line search for $\eta$}
\label{alg.LineSearch}
\begin{algorithmic}[1] 
\Require
$\eta=\eta_0$, $c$ (e.g., $0.5$), $\gamma$ (e.g., $0.1$).
\While {$\mathcal{J}(\bm{X}-\eta \dfrac{\partial \mathcal{J}}{\partial \bm{X}})>\mathcal{J}(\bm{X})-\gamma\eta\Vert \dfrac{\partial \mathcal{J}}{\partial \bm{X}}\Vert_F^2$}
\State $\eta=c\eta$
\EndWhile
\Ensure $\eta$
\end{algorithmic}
\end{algorithm}

In Algorithm \ref{alg.LineSearch}, we need to initialize $\eta$ beforehand. A large $\eta_0$ may provide a broad search region to obtain a better solution but will need more iterations. We can compute a coarse estimation of $L_\mathcal{J}$ to initialize $\eta$ as
\begin{equation}
\eta_0=10/L_\mathcal{J},
\end{equation}
where
\begin{equation}\label{Eq.Lf}
L_\mathcal{J}=\Vert \dfrac{2}{\sigma^2}(\bm{H}-\varsigma\bm{I})+\mu \bm{I}\Vert_2.
\end{equation}
In (\ref{Eq.Lf}), $\bm{I}$ is an $n\times n$ identity matrix and $\Vert \cdot\Vert_2$ denotes the spectral norm of matrix. The estimation of $L_\mathcal{J}$ is from $\partial \mathcal{J}/\partial \bm{X}$, in which we have regarded $\bm{H}$ as a constant matrix not involved with $\bm{X}$ and replaced $\bm{X}\odot (\bm{B}\bm{H})$ with $\varsigma \bm{X}$. The reason is that $\bm{H}$ is related with $\bm{K}$ and the scaling factor $\sigma^2$ has made the impact of $\bm{X}$ quite small. In addition, the elements of each row of $\bm{B}\bm{H}$ are the same and hence we replace $\odot \bm{B}\bm{H}$ by a number $\varsigma$, which is the average of all elements of $\bm{B}\bm{H}$. The derivation of (\ref{Eq.Lf}) is detailed as follows. 
\begin{equation}\label{Eq.estL}
\begin{aligned}
&\Vert \dfrac{\partial \mathcal{J}}{\partial \bm{X}_a}-\dfrac{\partial \mathcal{J}}{\partial \bm{X}_b}\Vert_F=\Vert \dfrac{2}{\sigma^2}(\bm{X}_a\bm{H}_a-\bm{X}_a\odot (\bm{B}\bm{H}_a)\\
&-\bm{X}_b\bm{H}_b+\bm{X}_b\odot (\bm{B}\bm{H}_b))+\mu(\bm{X}_a-\bm{X}_b)\Vert_F\\
\approx & \Vert \dfrac{2}{\sigma^2}(\bm{X}_a\bm{H}-\bm{X}_a\odot (\bm{B}\bm{H})\\
&-\bm{X}_b\bm{H}+\bm{X}_b\odot (\bm{B}\bm{H}))+\mu(\bm{X}_a-\bm{X}_b)\Vert_F\\
\approx &\Vert \dfrac{2}{\sigma^2}(\bm{X}_a\bm{H}-\varsigma \bm{X}_a-\bm{X}_b\bm{H}+\varsigma \bm{X}_b)+\mu(\bm{X}_a-\bm{X}_b)\Vert_F\\
= &\Vert (\bm{X}_a-\bm{X}_b)(\dfrac{2}{\sigma^2}(\bm{H}-\varsigma\bm{I}) +\mu \bm{I})\Vert_F\\
\leq &\Vert \bm{X}_a-\bm{X}_b\Vert_F \Vert(\dfrac{2}{\sigma^2}(\bm{H}-\varsigma\bm{I}) +\mu \bm{I}\Vert_2\\
\end{aligned}
\end{equation}

Therefore, $\alpha L_\mathcal{J} (0<\alpha<1)$ could be an estimation of the Lipschitz constant of $\dfrac{\partial \mathcal{J}}{\partial \bm{X}}$ if $\alpha$ is small enough. We found that when $\eta_0=10/L_\mathcal{J}$, Algorithm \ref{alg.LineSearch} can find $\eta$ in at most 5 iterations.

The optimization of RKPCA is in the nonconvex framework of ADMM studied in \cite{doi:10.1137/140990309} (problem (3.2) and Algorithm 4 of the paper). The corresponding case is
\begin{equation}\label{Eq.ADMM_ref}
\min\limits_{\bm{u},\bm{v}} f(\bm{u})+g(\bm{v}),\ s.t. \bm{u}+\bm{A}\bm{v}=\bm{c},
\end{equation}
where $f(\cdot)$ is nonconvex and $g(\cdot)$ is convex but nonsmooth. In our problem, $f(\cdot)$ is $\textup{Tr}(\bm{K}^{1/2})$, $g(\cdot)$ is $\lambda\Vert\bm{E}\Vert_1$, $\bm{A}$ is an identity matrix, and $\bm{c}$ is $\bm{M}$. In \cite{doi:10.1137/140990309}, it was shown that ADMM for (\ref{Eq.ADMM_ref}) is able to converge to the set of stationary solutions, provided that the penalty parameter in the augmented Lagrange is large enough. More detailedly, the following conditions are required in the proximal version of solving (\ref{Eq.ADMM_ref}): (a) $f(\cdot)$ is smooth with $L$-Lipschitz continuous gradient and $\bm{A}$ is of full column rank; (b) the two subproblems do not diverge and the corresponding objective functions descrease sufficiently; (c) $\mu\geq L$. In our optimization, the Lipschitz constant of $f(\bm{u})$'s gradient is estimated as $L=\Vert 2(\bm{H}-\varsigma\bm{I})/{\sigma^2}\Vert_2$ according to (\ref{Eq.estL}) and $\bm{A}$ is an identity matrix. Therefore, condition (a) holds. The two subproblems are handled by backtracking line search and soft thresholding respectively, which meets condition (b). In addition, condition (c) holds when $\sigma$ and $\mu$ are large enough. Therefore the convergence of Algorithm \ref{alg.RKPCA} can be proved accordingly through the similar approach of \cite{doi:10.1137/140990309}, which will not be detailed in this paper. Similar to the optimization of RPCA and other methods solved by ADMM, the Lagrange penalty $\mu$ in Algorithm \ref{alg.RKPCA} is quite easy to determine. We can just set $\mu=10\lambda$. In fact, ADMM is not sensitive to the Lagrange penalty.

\subsection{Optimization via proximal linearized minimization with adaptive step size (PLM+AdSS)}
In RKPCA, problem (\ref{Eq.RKPCA_2}) can be rewritten as
\begin{equation}\label{Eq.RKPCA_3}
\min\limits_{\bm{E}}\textup{Tr}(\bm{K}^{1/2})+\lambda\Vert \bm{E}\Vert_1,
\end{equation}
where $\bm{K}$ is the kernel matrix carried out on $\bm{M}-\bm{E}$. We denote the objective function of (\ref{Eq.RKPCA_3}) by $\mathcal{J}$. RKPCA in the form of (\ref{Eq.RKPCA_3}) has no constraint and has only one block of variables. Hence, we propose to solve (\ref{Eq.RKPCA_3}) by proximal linearied minimization with adaptive step size (PLM+AdSS), which is shown in Algorithm \ref{alg.RKPCA_PLM}. At $t$-th iteration, we linearize $\textup{Tr}(\bm{K}^{1/2})$ at $\bm{E}^{(t-1)}$ as
\begin{equation}
\begin{aligned}
\textup{Tr}(\bm{K}^{1/2})\approx\ &\textup{Tr}(\bm{K}_{t-1}^{1/2})+\langle \dfrac{\partial \mathcal{J}}{\partial \bm{E}^{(t-1)}}, \bm{E}-\bm{E}^{(t-1)}\rangle\\
&+\dfrac{\nu}{2}\Vert \bm{E}-\bm{E}^{(t-1)}\Vert_F^2,
\end{aligned}
\end{equation}
where $\nu$ is the step size and
\begin{equation}\label{Eq.Al3_dE}
\dfrac{\partial \mathcal{J}}{\partial \bm{E}^{(t-1)}}=-\dfrac{2}{\sigma^2}((\bm{M}-\bm{E}^{(t-1)})\bm{H}-(\bm{M}-\bm{E}^{(t-1)})\odot (\bm{B}\bm{H})).
\end{equation}
Then we solve
\begin{equation}
\min\limits_{\bm{E}}\dfrac{\nu}{2}\Vert \bm{E}-\bm{E}^{(t-1)}+\dfrac{\partial \mathcal{J}}{\partial \bm{E}^{(t-1)}}/\nu\Vert_F^2+\lambda\Vert \bm{E}\Vert_1,
\end{equation}
for which the solution is
\begin{equation}\label{Eq.solve_E_result}
\bm{E}^{(t)}=\Theta_{\lambda/\nu}(\bm{E}^{(t-1)}-\dfrac{\partial \mathcal{J}}{\partial \bm{E}^{(t-1)}}/\nu).
\end{equation}
The Lipschitz constant of $\dfrac{\partial \mathcal{J}}{\partial \bm{E}^{(t-1)}}$ is a natural choice of the step size $\nu$. Similar to (\ref{Eq.estL}), we can estimate the Lipschitz constant of $\dfrac{\partial \mathcal{J}}{\partial \bm{E}^{(t-1)}}$ as 
\begin{equation}
\hat{L}_{\triangledown\mathcal{J}}=\Vert\dfrac{2}{\sigma^2}(\bm{H}-\varsigma\bm{I})\Vert_2.
\end{equation}
In Algorithm \ref{alg.RKPCA_PLM}, we set $\nu$ as $\omega \hat{L}_{\triangledown\mathcal{J}}$ and increase $\omega$ by $\omega=c\omega$ if the objective function does not decrease. The convergence condition of Algorithm \ref{alg.RKPCA_PLM} is $\Vert \bm{E}^{(t)}-\bm{E}^{(t-1)}\Vert_F/\Vert \bm{M}\Vert_F<\varepsilon$ (e.g.$10^{-4}$). We have the following lemma:
\begin{theorem}\label{lem0}
(a) The objective function in Algorithm \ref{alg.RKPCA_PLM} is able to converge if $t$ is large enough. (b) The solution of $\bm{E}$ generated by Algorithm \ref{alg.RKPCA_PLM} is able to converge if $t$ is large enough.
\end{theorem}
The above lemma will be proved in the following content.

\renewcommand{\algorithmicrequire}{\textbf{Input:}}
\renewcommand{\algorithmicensure}{\textbf{Output:}}
\begin{algorithm}[h]
\caption{Solve RKPCA with PLM+AdSS}
\label{alg.RKPCA_PLM}
\begin{algorithmic}[1] 
\Require
$\bm{M}$, $k(\cdot,\cdot)$, $\lambda$, $\omega=0.1$, $c>1$, $t_{max}$.
\State \textbf{initialize:} $\bm{E}^{(0)}=0$, $t=0$
\Repeat
\State Compute $\dfrac{\partial \mathcal{J}}{\partial \bm{E}^{(t-1)}}$ by (\ref{Eq.Al3_dE})
\State $\nu=\omega\Vert\dfrac{2}{\sigma^2}(\bm{H}-\varsigma\bm{I})\Vert_2$
\State $\bm{E}^{(t)}=\Theta_{\lambda/\nu}(\bm{E}^{(t-1)}-\dfrac{\partial \mathcal{J}}{\partial \bm{E}^{(t-1)}}/\nu)$
\If{$\mathcal{J}(\bm{E}^{(t)})>\mathcal{J}(\bm{E}^{(t-1)})$}
$\omega=c\omega$
\EndIf
\State $t=t+1$
\Until{converged or $t=t_{max}$}
\Ensure $\bm{E}=\bm{E}^{(t)}$, $\bm{X}=\bm{M}-\bm{E}$
\end{algorithmic}
\end{algorithm}

The problem of RKPCA given by (\ref{Eq.RKPCA_3}) is a case of the following general problem
\begin{equation}
\min\limits_{\bm{\upsilon}} J(\bm{\upsilon})=f(\bm{\upsilon})+g(\bm{\upsilon}),
\end{equation}
where $f(\cdot)$ is nonconvex but differentiable and $g(\cdot)$ is convex but not differentiable. We linearize $f(\bm{\upsilon})$ at $\bar{\bm{\upsilon}}$ as
\begin{equation}
f(\bm{\upsilon})=f(\bar{\bm{\upsilon}})+\langle \bm{\upsilon}- \bar{\bm{\upsilon}}, \triangledown f(\bar{\bm{\upsilon}})\rangle+\dfrac{\tau}{2}\Vert \bm{\upsilon}-\bar{\bm{\upsilon}}\Vert^2+e,
\end{equation}
where $\triangledown f(\bar{\bm{\upsilon}})$ denotes the gradient of $f(\cdot)$ at $\bar{\bm{\upsilon}}$ and $e$ denotes the residual of the quadratic approximation. Then we solve
\begin{equation}
\min\limits_{\bm{\upsilon}} g(\bm{\upsilon})+\langle \bm{\upsilon}- \bar{\bm{\upsilon}}, \triangledown f(\bar{\bm{\upsilon}})\rangle+\dfrac{\tau}{2}\Vert \bm{\upsilon}-\bar{\bm{\upsilon}}\Vert^2.
\end{equation}
The closed-form solution is obtained by the proximal algorithm \cite{ProximalA}, i.e.,
\begin{equation}
\upsilon^+\in prox_{\tau}^g(\bm{\upsilon}-\triangledown f(\bar{\bm{\upsilon}})/\tau).
\end{equation}
We give the following two lemmas.
\begin{lemma}\label{lem1}
If $f(\cdot)$ is continuously differentiable and its gradient $\triangledown f$ is $L_{\triangledown f}$-Lipschitz continuous, then
\begin{equation}\label{Eq.LipC}
f(\bm{\upsilon})\leq f(\bar{\bm{\upsilon}})+\langle \upsilon-\bar{\bm{\upsilon}},\triangledown f(\bar{\bm{\upsilon}})\rangle+\dfrac{L_{\triangledown f}}{2}\Vert \bm{\upsilon}-\bar{\bm{\upsilon}}\Vert^2.
\end{equation}
\end{lemma}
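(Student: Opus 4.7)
The plan is to prove this descent lemma by the standard route: parametrize the segment between $\bar\upsilon$ and $\upsilon$, apply the fundamental theorem of calculus, and then invoke the Lipschitz hypothesis on $\nabla f$ to bound the discrepancy against the linear model.

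First I would introduce the auxiliary scalar function $\varphi(t) = f(\bar\upsilon + t(\upsilon-\bar\upsilon))$ for $t \in [0,1]$. Since $f$ is continuously differentiable, $\varphi$ is continuously differentiable with $\varphi'(t) = \langle \nabla f(\bar\upsilon + t(\upsilon-\bar\upsilon)), \upsilon-\bar\upsilon\rangle$. The fundamental theorem of calculus then yields
\begin{equation}
f(\upsilon) - f(\bar\upsilon) = \int_0^1 \langle \nabla f(\bar\upsilon + t(\upsilon-\bar\upsilon)),\, \upsilon-\bar\upsilon\rangle \, dt.
\end{equation}

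Next I would subtract the ``linear part'' $\langle \nabla f(\bar\upsilon), \upsilon-\bar\upsilon\rangle = \int_0^1 \langle \nabla f(\bar\upsilon), \upsilon-\bar\upsilon\rangle\, dt$ from both sides to obtain the remainder in integral form:
\begin{equation}
f(\upsilon) - f(\bar\upsilon) - \langle \nabla f(\bar\upsilon), \upsilon-\bar\upsilon\rangle = \int_0^1 \langle \nabla f(\bar\upsilon + t(\upsilon-\bar\upsilon)) - \nabla f(\bar\upsilon),\, \upsilon-\bar\upsilon\rangle \, dt.
\end{equation}
Applying the Cauchy--Schwarz inequality inside the integral and then the $L_{\triangledown f}$-Lipschitz continuity of $\nabla f$ on the segment gives
\begin{equation}
\bigl|\langle \nabla f(\bar\upsilon + t(\upsilon-\bar\upsilon)) - \nabla f(\bar\upsilon),\, \upsilon-\bar\upsilon\rangle\bigr| \leq L_{\triangledown f} \, t \, \Vert \upsilon-\bar\upsilon\Vert^2.
\end{equation}

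Finally, integrating the bound $L_{\triangledown f} t \Vert \upsilon-\bar\upsilon\Vert^2$ over $t \in [0,1]$ produces the factor $\tfrac{1}{2}$, which yields exactly the upper bound claimed in (\ref{Eq.LipC}). The argument has no real obstacle; the only point requiring mild care is the passage from the signed inner product to an absolute-value bound so that one retains an \emph{upper} estimate on $f(\upsilon)-f(\bar\upsilon)-\langle \nabla f(\bar\upsilon), \upsilon-\bar\upsilon\rangle$ rather than merely on its modulus, but this is immediate since any quantity is bounded above by its absolute value.
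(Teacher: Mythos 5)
Your proof is correct: it is the standard derivation of the descent lemma via the fundamental theorem of calculus along the segment from $\bar{\upsilon}$ to $\upsilon$, Cauchy--Schwarz, and the Lipschitz bound, with the factor $\tfrac{1}{2}$ arising from $\int_0^1 t\,dt$. The paper states Lemma \ref{lem1} without any proof (treating it as the classical descent lemma used as input to Lemma \ref{lem2}), so there is nothing to compare against; your argument correctly supplies the omitted details, including the mild point that the signed remainder is dominated by its absolute value.
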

\begin{lemma}\label{lem2}
Given that the gradient of $f(\cdot)$ is $L_{\triangledown f}$-Lipschitz continuous, $\bm{\upsilon}^+\in prox_{\tau}^g(\bm{\upsilon}-\triangledown f(\bar{\bm{\upsilon}})/\tau)$, and $\tau>L_{\triangledown f}$, we have
\begin{equation}
f(\bm{\upsilon}^+)+g(\bm{\upsilon}^+)\leq f(\bar{\bm{\upsilon}})+g(\bar{\bm{\upsilon}})-\dfrac{\tau-L_{\triangledown f}}{2}\Vert \bm{\upsilon}^+-\bm{\upsilon}\Vert^2.
\end{equation}
\end{lemma}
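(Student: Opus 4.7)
The plan is to couple the optimality of the proximal step (which lives in $g$-space) with the descent inequality from Lemma~\ref{lem1} (which lives in $f$-space) by choosing $\bar{\upsilon}$ as the test competitor in the variational inequality defining the prox; the gradient cross-terms then cancel and a sufficient-decrease bound emerges once the two anchor points are properly identified.

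First I would write out $\upsilon^+\in prox_\tau^g(\upsilon-\triangledown f(\bar{\upsilon})/\tau)$ as the minimizer of $w\mapsto g(w)+\tfrac{\tau}{2}\Vert w-\upsilon+\triangledown f(\bar{\upsilon})/\tau\Vert^2$ and test it at $w=\bar{\upsilon}$. Expanding both quadratics and cancelling the common $\tfrac{1}{2\tau}\Vert\triangledown f(\bar{\upsilon})\Vert^2$ yields
\[
g(\upsilon^+)+\tfrac{\tau}{2}\Vert\upsilon^+-\upsilon\Vert^2+\langle\upsilon^+-\upsilon,\triangledown f(\bar{\upsilon})\rangle\le g(\bar{\upsilon})+\tfrac{\tau}{2}\Vert\bar{\upsilon}-\upsilon\Vert^2+\langle\bar{\upsilon}-\upsilon,\triangledown f(\bar{\upsilon})\rangle,
\]
from which, after isolating $g(\upsilon^+)$,
\[
g(\upsilon^+)\le g(\bar{\upsilon})+\langle\bar{\upsilon}-\upsilon^+,\triangledown f(\bar{\upsilon})\rangle+\tfrac{\tau}{2}\Vert\bar{\upsilon}-\upsilon\Vert^2-\tfrac{\tau}{2}\Vert\upsilon^+-\upsilon\Vert^2.
\]

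Second, applying Lemma~\ref{lem1} at the anchor $\bar{\upsilon}$ with evaluation point $\upsilon^+$ produces
\[
f(\upsilon^+)\le f(\bar{\upsilon})+\langle\upsilon^+-\bar{\upsilon},\triangledown f(\bar{\upsilon})\rangle+\tfrac{L_{\triangledown f}}{2}\Vert\upsilon^+-\bar{\upsilon}\Vert^2.
\]
Adding the two displayed estimates, the pair of inner products $\langle\bar{\upsilon}-\upsilon^+,\triangledown f(\bar{\upsilon})\rangle$ and $\langle\upsilon^+-\bar{\upsilon},\triangledown f(\bar{\upsilon})\rangle$ cancels exactly and I obtain
\[
f(\upsilon^+)+g(\upsilon^+)\le f(\bar{\upsilon})+g(\bar{\upsilon})+\tfrac{L_{\triangledown f}}{2}\Vert\upsilon^+-\bar{\upsilon}\Vert^2+\tfrac{\tau}{2}\Vert\bar{\upsilon}-\upsilon\Vert^2-\tfrac{\tau}{2}\Vert\upsilon^+-\upsilon\Vert^2.
\]

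Third I would reconcile this intermediate inequality with the advertised form. In Algorithm~\ref{alg.RKPCA_PLM} the gradient is evaluated at exactly the iterate that serves as the proximity center, so the two symbols $\bar{\upsilon}$ and $\upsilon$ of the lemma both refer to $E^{(t-1)}$; under this identification $\Vert\bar{\upsilon}-\upsilon\Vert^2=0$ and $\Vert\upsilon^+-\bar{\upsilon}\Vert^2=\Vert\upsilon^+-\upsilon\Vert^2$, and the right-hand side collapses to $f(\bar{\upsilon})+g(\bar{\upsilon})-\tfrac{\tau-L_{\triangledown f}}{2}\Vert\upsilon^+-\upsilon\Vert^2$, which is the claimed bound; the assumption $\tau>L_{\triangledown f}$ is what turns the coefficient strictly negative, producing actual descent whenever $\upsilon^+\ne\upsilon$.

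The main obstacle I anticipate is conceptual rather than computational: the unreduced estimate carries a nonnegative residual $\tfrac{L_{\triangledown f}}{2}\Vert\upsilon^+-\bar{\upsilon}\Vert^2+\tfrac{\tau}{2}\Vert\bar{\upsilon}-\upsilon\Vert^2$ that cannot be absorbed into the target $-\tfrac{\tau-L_{\triangledown f}}{2}\Vert\upsilon^+-\upsilon\Vert^2$ by any purely algebraic manipulation when $\bar{\upsilon}$ and $\upsilon$ are allowed to be arbitrary distinct points. Settling exactly how the two-point notation in the lemma is to be read, and invoking the structural alignment $\bar{\upsilon}=\upsilon$ supplied by the algorithm, is therefore the step on which the clean form of the bound actually hinges.
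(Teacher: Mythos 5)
Your proposal is correct and follows essentially the same route as the paper: test the optimality of the proximal subproblem at $\bar{\upsilon}$, invoke Lemma~\ref{lem1} at $\upsilon^+$, and add the two inequalities so the gradient cross-terms cancel. Your extra care in the third step is well placed — the paper's own proof silently centers the proximal quadratic at $\bar{\upsilon}$ while stating the conclusion with $\Vert\upsilon^+-\upsilon\Vert$, so your explicit observation that the clean bound requires the identification $\bar{\upsilon}=\upsilon$ (which is exactly how the lemma is applied in Algorithm~\ref{alg.RKPCA_PLM}, with both symbols denoting $E^{(t-1)}$) correctly diagnoses the notational conflation rather than introducing any gap.
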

\begin{proof}
As $\bm{\upsilon}^+\in \min\limits_{\bm{\upsilon}} g(\bm{\upsilon})+\langle \bm{\upsilon}- \bar{\bm{\upsilon}}, \triangledown f(\bar{\bm{\upsilon}})\rangle+\dfrac{\tau}{2}\Vert \bm{\upsilon}-\bar{\bm{\upsilon}}\Vert^2$, by taking $\bm{\upsilon}=\bar{\bm{\upsilon}}$, we have
\begin{equation}\label{Eq.Lem2P1}
g(\bm{\upsilon}^+)+\langle \bm{\upsilon}^+- \bar{\bm{\upsilon}}, \triangledown f(\bar{\bm{\upsilon}})\rangle+\dfrac{\tau}{2}\Vert \bm{\upsilon}^+-\bar{\bm{\upsilon}}\Vert^2\leq g(\bar{\bm{\upsilon}}).
\end{equation}
Using Lemma \ref{lem1} with $\bm{\upsilon}=\bm{\upsilon}^+$, we have
\begin{equation}\label{Eq.Lem2P2}
f(\bm{\upsilon}^+)\leq f(\bar{\bm{\upsilon}})+\langle \bm{\upsilon}^+-\bar{\bm{\upsilon}},\triangledown f(\bar{\bm{\upsilon}})\rangle+\dfrac{L_{\triangledown f}}{2}\Vert \bm{\upsilon}^+-\bar{\bm{\upsilon}}\Vert^2.
\end{equation}
Adding (\ref{Eq.Lem2P2}) with (\ref{Eq.Lem2P1}), we have
\begin{equation}
f(\bm{\upsilon}^+)+g(\bm{\upsilon}^+)\leq f(\bar{\bm{\upsilon}})+g(\bar{\bm{\upsilon}})-\dfrac{\tau-L_{\triangledown f}}{2}\Vert \bm{\upsilon}^+-\bm{\upsilon}\Vert^2.
\end{equation}
This finished the proof of Lemma \ref{lem2}.
\end{proof}

Lemma \ref{lem2} shows that when $\tau>L_{\triangledown f}$,
\begin{equation}
J(\bm{\upsilon}^{(t+1)})<J(\bm{\upsilon}^{(t)})<\cdots<J(\bm{\upsilon}^{(1)})<J(\bm{\upsilon}^{(0)}).
\end{equation}
where $\bm{\upsilon}^{(1)},\cdots,\bm{\upsilon}^{(t+1)}$ is a series of $\bm{\upsilon}$ obtained from the proximal linearized algorithm.
Since $J(\bm{\upsilon})>-\infty$, the proximal linearized algorithm is able to converge. 

In Algorithm \ref{alg.RKPCA_PLM}, $\tau$ is estimated as
$\nu=\omega\hat{L}_{\triangledown\mathcal{J}}=\omega\Vert 2(\bm{H}-\varsigma\bm{I})/\sigma^2\Vert_2$. Let $L_{\triangledown\mathcal{J}}^{(t)}$ be the true Lipschitz constant of $\dfrac{\partial \mathcal{J}}{\partial \bm{E}^{(t)}}$ at $t$-th iteration. Let $\nu^{(t)}$ be the estimated $\tau$ at $t$-th iteration. We have
\begin{equation}\label{Eq.Jt}
\mathcal{J}(\bm{E}^{(t)})-\mathcal{J}(\bm{E}^{(t-1)})\leq -\dfrac{\nu^{(t)}-L_{\triangledown\mathcal{J}}^{(t)}}{2}\Vert \bm{E}^{(t)}-\bm{E}^{(t-1)}\Vert_F^2.
\end{equation}
We increase $\omega$ by $\omega=c\omega$ if $\mathcal{J}(\bm{E}^{(t)})-\mathcal{J}(\bm{E}^{(t-1)})>0$. If $\omega^{(t)}>L_{\triangledown\mathcal{J}}^{(t)}/\hat{L}_{\triangledown\mathcal{J}}^{(t)}$, we have $\mathcal{J}(\bm{E}^{(t)})<\mathcal{J}(\bm{E}^{(t-1)})$. If $c$ is large enough, there exists a number $l>1$ such that
\begin{equation}
\nu^{(t+1)}>L_{\triangledown\mathcal{J}}^{(t+1)},\nu^{(t+2)}>L_{\triangledown\mathcal{J}}^{(t+2)},\cdots,\nu^{(t+l)}>L_{\triangledown\mathcal{J}}^{(t+l)}
\end{equation}
where $\omega^{(t+l)}=\cdots=\omega^{(t+1)}=\omega^{(t)}$.
Therefore, $\mathcal{J}(\bm{E})$ will decrease with high probability even when $\omega$ is relatively small. When $\omega$ is large enough, we will always have $\nu>L_{\triangledown\mathcal{J}}$. Since $\mathcal{J}(\bm{E})$ is bounded, when $t\rightarrow\infty$, $\mathcal{J}(\bm{E}^{(t)})-\mathcal{J}(\bm{E}^{(t)})=0$. It means that the objective function in Algorithm \ref{alg.RKPCA_PLM} is able to converge. This proved Theorem \ref{lem0}(a).

Summing both sides of (\ref{Eq.Jt}) from $1$ to $N$, we have
\begin{equation}
\mathcal{J}(\bm{E}^{(0)})-\mathcal{J}(\bm{E}^{(N)})\geq \sum_{t=1}^N \dfrac{\nu^{(t)}-L_{\triangledown\mathcal{J}}^{(t)}}{2}\Vert \bm{E}^{(t)}-\bm{E}^{(t-1)}\Vert_F^2,
\end{equation}
which indicates that
\begin{equation}
\sum_{t=1}^{\infty} \dfrac{\nu^{(t)}-L_{\triangledown\mathcal{J}}^{(t)}}{2}\Vert \bm{E}^{(t)}-\bm{E}^{(t-1)}\Vert_F^2<\infty.
\end{equation}
When $t$ is small, $\nu^{(t)}>L_{\triangledown\mathcal{J}}^{(t)}$ holds with high probability. If $t$ is larger than $\bar{t}$ (when $\omega$ reaches to a large enough value), $\nu^{(t)}>L_{\triangledown\mathcal{J}}^{(t)}$ always holds. Then we have
\begin{equation}
s+\sum_{t=\bar{t}}^{\infty} \dfrac{\nu^{(t)}-L_{\triangledown\mathcal{J}}^{(t)}}{2}\Vert \bm{E}^{(t)}-\bm{E}^{(t-1)}\Vert_F^2<\infty,
\end{equation}
where $s=\sum_{t=1}^{\bar{t}} \dfrac{\nu^{(t)}-L_{\triangledown\mathcal{J}}^{(t)}}{2}\Vert \bm{E}^{(t)}-\bm{E}^{(t-1)}\Vert_F^2\geq 0$. It further indicates $\bm{E}^{(t)}-\bm{E}^{(t-1)}=0$ if $t\rightarrow \infty$. Therefore, the solution of $\bm{E}$ generated by Algorithm \ref{alg.RKPCA_PLM} is able to converge if $t$ is large enough. This proved Theorem \ref{lem0}(b).

Compared with Algorithm \ref{alg.RKPCA}, Algorithm \ref{alg.RKPCA_PLM} cannot ensure that the objective function is always decreasing when $t$ is small. In Algorithm \ref{alg.LineSearch}, to find a suitable $\eta$, the backtracking line search requires evaluating the objective function multiple times, which will increase the computational cost. These two algorithm will be compared in the section of experiments.

The main computational cost of RKPCA is from the computation of $\bm{K}^{p/2}$ and its gradient, which requires performing SVD on an $n\times n$ matrix. Therefore, the computational complexity of RKPCA is $O(n^3)$. In RPCA, the main computational cost is from the SVD on a $d\times n$ matrix. Hence, the computational complexity of RPCA is $O(\min(d^2n,dn^2))$. Although the computational cost of RKPCA is higher than that of RPCA, the recovery accuracy of RKPCA is much higher than that of RPCA for nonlinear data and high-rank matrices. For large-scale data (e.g., $n>2000$), instead of full SVD, truncated SVD or randomized SVD (RSVD) \cite{randomsvd} should be applied to RKPCA. The computational complexity of RKPCA with RSVD in each iteration is reduced to $O(\tilde{r}n^2)$, where $\tilde{r}$ is the approximate rank of $\bm{K}$ at iteration $t$ and is non-increasing. RSVD \cite{randomsvd} and the corresponding optimization of RKPCA (solved by PLM+AdSS) are shown in Algorithm \ref{alg.rsvd} and Algorithm \ref{alg.RKPCA_rsvd} respectively. 

\renewcommand{\algorithmicrequire}{\textbf{Input:}}
\renewcommand{\algorithmicensure}{\textbf{Output:}}
\begin{algorithm}[h]
\caption{Randomized Singular Value Decomposition \cite{randomsvd}}
\label{alg.rsvd}
\begin{algorithmic}[1] 
\Require
$\bm{X}\in\mathbb{R}^{m\times n}$, $\tilde{r}$.
\State Generate random Gaussian matrix $\bm{P}\in\mathbb{R}^{n\times 2\tilde{r}}$
\State $\bm{Y}=\bm{X}\bm{P}$
\State $\bm{W}\longleftarrow$ orthonormal basis for the range of $\bm{Y}$
\State $\bm{Z}=\bm{W}^T\bm{X}$
\State Perform economy SVD: $\bm{Z}=\bm{U}\bm{S}\bm{V}^T$
\State $\bm{U}\longleftarrow\bm{W}\bm{U}$
\State $\bm{U}\longleftarrow\lbrace\bm{U}_{\cdot j}\rbrace_{j=1}^{\tilde{r}},\bm{S}\longleftarrow\lbrace\bm{S}_{jj}\rbrace_{j=1}^{\tilde{r}},\bm{V}\longleftarrow\lbrace\bm{V}_{\cdot j}\rbrace_{j=1}^{\tilde{r}}$
\Ensure $\bm{U},\bm{S},\bm{V}$ ($\bm{X}\approx\bm{U}\bm{S}\bm{V}^T)$
\end{algorithmic}
\end{algorithm}

\renewcommand{\algorithmicrequire}{\textbf{Input:}}
\renewcommand{\algorithmicensure}{\textbf{Output:}}
\begin{algorithm}[h]
\caption{RKPCA solved by PLM+AdSS and RSVD}
\label{alg.RKPCA_rsvd}
\begin{algorithmic}[1] 
\Require
$\bm{M}$, $k(\cdot,\cdot)$, $\lambda$, $p$, $\tilde{r}$, $\xi$, $\omega=0.1$, $c>1$, $t_{max}$.
\State \textbf{initialize:} $\bm{E}^{(0)}=0$, $t=0$
\Repeat
\State $\bm{K}=\bm{U}\bm{S}\bm{V}^T$ using Algorithm \ref{alg.rsvd} with parameter $\tilde{r}$ 
\State $\tilde{r}\longleftarrow$ the number of sigular values larger than $\xi$
\State $\bm{K}^{\tfrac{p}{2}-1}=\bm{U}\bm{S}^{\tfrac{p}{2}-1}\bm{V}^T$
\State $\bm{H}=\dfrac{p}{2}\bm{K}^{\tfrac{p}{2}-1}\odot\bm{K}$
\State Compute $\dfrac{\partial \mathcal{J}}{\partial \bm{E}^{(t-1)}}$ using (\ref{Eq.Al3_dE})
\State $\nu=\omega\Vert\dfrac{2}{\sigma^2}(\bm{H}-\varsigma\bm{I})\Vert_2$
\State $\bm{E}^{(t)}=\Theta_{\lambda/\nu}(\bm{E}^{(t-1)}-\dfrac{\partial \mathcal{J}}{\partial \bm{E}^{(t-1)}}/\nu)$
\If{$\mathcal{J}(\bm{E}^{(t)})>\mathcal{J}(\bm{E}^{(t-1)})$}
$\omega=c\omega$
\EndIf
\State $t=t+1$
\Until{converged or $t=t_{max}$}
\Ensure $\bm{E}=\bm{E}^{(t)}$, $\bm{X}=\bm{M}-\bm{E}$
\end{algorithmic}
\end{algorithm}

\subsection{Robust subspace clustering by RKPCA}
Sparse subspace clustering \cite{SSC2009} is an effective and popular method to cluster data into different groups according to different subspaces. In recent years, a few robust variants of SSC and other methods have been proposed for handling noises and outliers in subspace clutering \cite{SSC_PAMIN_2013,5995365,soltanolkotabi2014,7159061,7222444,NoisySSCJMLR,SRSSC}. For example, in \cite{SSC_PAMIN_2013}, the modified version of SSC is robust to noise and sparse outlying entries. In \cite{soltanolkotabi2014} and \cite{NoisySSCJMLR}, the variants of SSC are provably effective in handling noises. In \cite{7159061}, a variant of SSC incorporating the robust measure correntropy was proposed to deal with large corruptions. In \cite{SRSSC}, through using random anchor points and multilayer graphs, a scalable and robust variant of SSC was proposed. Nevertheless, the clustering performances of these methods are unsatisfactory when the data are heavily corrupted by sparse noises. The reason is that the highly corrupted data can make the sparse or low-rank representations significantly biased.

To improve the clustering accuracy, we can use RPCA or RKPCA to remove the noises and then perform SSC or other methods on the clean data. As a matter of fact, RPCA and RKPCA can be directly applied to subspace clustering, even when the data are heavily corrupted. Given a noisy data matrix $\bm{M}$, we use RKPCA to recover the clean data matrix $\bm{X}$. The truncated singular value decomposition of the matrix in the nonlinear feature space is given by
\begin{equation}
\phi(\bm{X})\approx \bm{U}_r\bm{S}_r\bm{V}_r^T,
\end{equation}
where $r$ is the latent dimensionality of the feature space. The subspaces of $\phi(\bm{X})$ can be segmented by using $\bm{V}_r$. We propose Algorithm \ref{alg.sc_rkpca} to cluster $\phi(\bm{X})$ or $\bm{X}$ according to the subspace membership. Similar to that in LRR \cite{LRR_PAMI_2013,NLRR_2016}, the parameter $q$ is an even number (e.g. 4) to ensure the non-negativity and sparsity of the affinity matrix $\bm{A}$.

\renewcommand{\algorithmicrequire}{\textbf{Input:}}
\renewcommand{\algorithmicensure}{\textbf{Output:}}
\begin{algorithm}[h]
\caption{Subspace clustering by RKPCA}
\label{alg.sc_rkpca}
\begin{algorithmic}[1] 
\Require
$\bm{M}\in\mathbb{R}^{d\times n}$, $C$, $r$, $q$.
\State Compute $\bm{X}$ and the corresponding $\bm{K}$ using Algorithm \ref{alg.RKPCA},  Algorithm \ref{alg.RKPCA_PLM}, or Algorithm \ref{alg.RKPCA_rsvd}
\State Perform eigenvalue decomposition $\bm{K}\approx \bm{V}_r\bm{S}_r\bm{V}_r^T$
\State $[\bm{V}_r]_{i\cdot}=[\bm{V}_r]_{i\cdot}/\Vert [\bm{V}_r]_{i\cdot}\Vert_2$, $i=1,2,\cdots,n$
\State $\bm{A}_{ij}=[\bm{V}_r\bm{V}_r^T]_{ij}^q$, $\bm{A}_{ij}=0$, $i,j=1,2,\cdots,n$
\State Perform spectral clustering on $\bm{A}$
\Ensure $C$ clusters of $\bm{X}$ or $\bm{M}$
\end{algorithmic}
\end{algorithm}

\section{Experiments}
\subsection{Synthetic data}\label{sec.syn}
\subsubsection{Data generation}
We generate a nonlinear data matrix by
\begin{equation}\label{Eq.Syn1}
\bm{X}=\bm{P}_1\bm{Z}+0.5(\bm{P}_2\bm{Z}^{\odot 2}+\bm{P}_3\bm{Z}^{\odot 3})
\end{equation}
where $\bm{Z}\in \mathbb{R}^{r\times n}$ are uniformly drawn from the interval $(-1,1)$, $\bm{P}\in\mathbb{R}^{d\times r}$ are drawn from standard normal distribution, and $\bm{Z}^{\odot u}$ denotes the $u$-th power acted on each entry of $\bm{Z}$. We set $d=20$, $r=2$, and $n=100$. The model (\ref{Eq.Syn1}) maps the low-dimensional latent variable $\bm{z}$ to high-dimensional variable $\bm{x}$ through a nonlinear function $\bm{x}=f(\bm{z})$. The nonlinearity of the data is not very strong, which is quite practical because linearity and nonlinearity always exist simultaneously. We added Gaussian noises $\mathcal{N}(0,1)$ to certain percentage (noise density, denoted by $\rho$) of the entries in $\bm{X}$ and then get a matrix $\bm{M}$ with sparse noises. We increase the noise density from $10\%$ to $80\%$ and use PCA, KPCA \cite{PreImageKPCA2004TNN}, RPCA \cite{RPCA} and RKPCA to recover $\bm{X}$ from $\bm{M}$. The recovery performance is measured by the relative error defined as 
\begin{equation}\label{Eq.e_rlt}
e_{rlt}=\Vert \bm{X}-\hat{\bm{X}}\Vert_F/\Vert \bm{X}\Vert_F,
\end{equation}
where $\hat{\bm{X}}$ denotes the recovered matrix. In KPCA and RKPCA, the parameter $\sigma$ of RBF kernel is determined by (\ref{Eq.kerpar}) with $\beta=1$. In fact, we have tried to extend the robust KPCA proposed in \cite{nguyen2009robust} to an unsupervised version that does not require clean training data. However, the performance is not satisfactory and absolutely not comparable to that of KPCA preimage proposed in \cite{PreImageKPCA2004TNN}. 

\subsubsection{Iteration performances of ADMM+BTLS and PLM+AdSS}
Figure \ref{Fig.converge_curve} shows the iteration performances of Algorithm \ref{alg.RKPCA} (ADMM+BTLS) and Algorithm \ref{alg.RKPCA_PLM} (PLM+AdSS) for solving RKPCA on the matrix with $50\%$ noise density. It is found that, the two algorithms can work well with a large range of the parameters. In ADMM+BTLS, $\eta=10L_{\mathcal{J}}$ and $\eta=5L_{\mathcal{J}}$ achieve nearly the same results because the true Lipschitz constant may be around $5L_{\mathcal{J}}$. In PLM+AdSS, though $\omega=0.01$ make the optimization diverged at the beginning of iteration, the algorithm still converged in about 60 iterations. In general, although ADMM+BTLS is able to reduce the objective function (augmented Lagrange function) in every iteration, its convergence speed is lower than that of PLM+AdSS. In addition, the outputs of the two algorithms are nearly the same. Therefore, PLM+AdSS is preferable to ADMM+BTLS. In this paper, RKPCA is solved by PLM+AdSS with $\omega=0.1$.

\begin{figure}[tb]
\centering
\includegraphics[width=7cm]{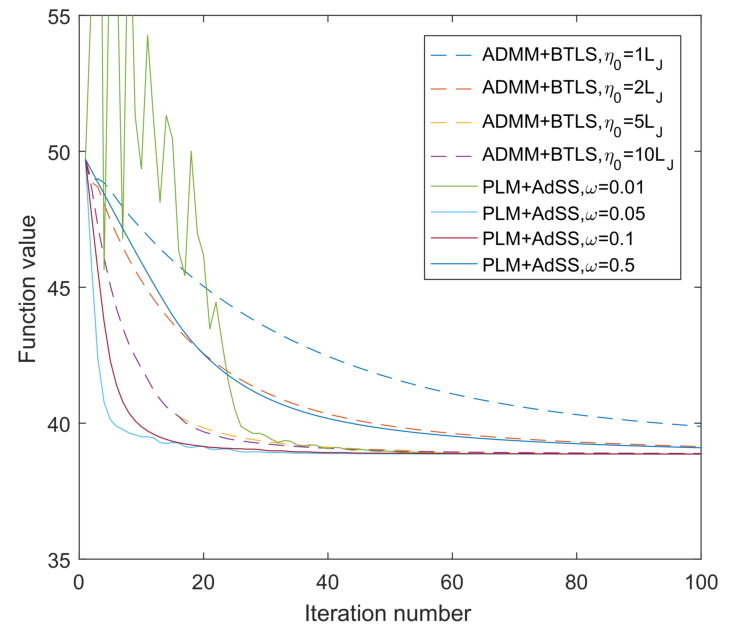}
\caption{Iteration performances of ADMM+BTLS and PLM+AdSS (for the first problem in Section \ref{sec.syn} with $50\%$ noise density)}\label{Fig.converge_curve}
\end{figure}

\subsubsection{Recovery results}
The experiments are repeated for 100 times and the average results are reported in Table \ref{Tab.Syn1}. As can be seen, the recovery error of RKPCA is significantly lower than those of other methods in almost all cases. According to Theorem \ref{theorem_sufficient}, to approximately recover the matrix, the noise density should meet $\rho<76\%$, which matches the results in Table \ref{Tab.Syn1} because the recovery error for $\rho=80\%$ is significantly higher than that for $\rho=70\%$. We use (\ref{Eq.Syn1}) with $d=20$, $r=2$, and $n=50$ to generate 5 matrices and stack them together to form a matrix of size $20\times 250$. Hence the matrix consists of the nonlinear data drawn from 5 different subspaces. We add Gaussian noises $\mathcal{N}(0,1)$ to $\rho$ (from $10\%$ to $50\%$) percentage of the entries of the matrix. The recovery errors (average of 50 trials) are reported in Table \ref{Tab.Syn2}. RKPCA consistently outperforms other methods.

\begin{table}[h!]
\caption{Relative errors ($\%$) on single-subspace data}
\centering
\begin{tabular}{cccccc}
\hline
     $\rho$      &      noisy &        PCA &       KPCA &       RPCA &      RKPCA \\
\hline
      10\% &     33.83  &     19.55  &     18.18  &  	{\bf 2.57}   &   2.62  \\

      20\% &     49.59  &     26.63  &     25.73  &     10.56  & {\bf 4.93 } \\

      30\% &     63.07  &     32.36  &     33.01  &     19.06  & {\bf 10.56 } \\

      40\% &     66.35  &     32.83  &     34.48  &     23.97  & {\bf 15.44 } \\

      50\% &     81.95  &     40.02  &     41.69  &     32.54  & {\bf 24.18 } \\

      60\% &     84.25  &     40.65  &     43.62  &     37.72  & {\bf 27.61 } \\

      70\% &     93.74  &     45.26  &     47.85  &     44.38  & {\bf 34.92 } \\
      
      80\% &     98.21  &     49.69  &     51.85  &     49.45  & {\bf 44.23 } \\
\hline
\end{tabular} \label{Tab.Syn1}
\end{table}

\begin{table}[htb]
\vspace{-0.2cm}
\caption{Relative errors ($\%$) on multiple-subspace data}
\footnotesize
\centering
\begin{tabular}{cccccc}
\hline
   $\rho$        &        noisy &        PCA &       KPCA &       RPCA &      RKPCA \\
\hline
      10\% &     35.22  &     31.84  &     25.82  &     26.27  & {\bf 9.88 } \\

      20\% &     49.86  &     42.84  &     35.52  &     36.75  & {\bf 19.6 } \\

      30\% &     60.97  &     50.39  &     42.41  &     44.86  & {\bf 29.07 } \\

      40\% &     68.49  &     55.29  &     47.39  &     49.95  & {\bf 36.16 } \\

      50\% &     75.35  &     58.62  &     51.89  &     56.81  & {\bf 44.62 } \\
\hline
\end{tabular}  \label{Tab.Syn2}
\end{table}

We also study the influence of the parameter $p$ in RKPCA. Figure \ref{Fig.pvalue}(a) and Figure \ref{Fig.pvalue}(b) show the recovery errors of RKPCA with different $p$ ($0.1\leq p\leq 1$) on the single-subspace data and multiple-subspace data. The recovery errors of PCA, KPCA, and RPCA are also shown in the figure. It can be seen that RKPCA with a smaller $p$ around $0.6$ can outperform RKPCA with $p=1$. However, for simplicity, throughout this paper, we only report the result of RKPCA with $p=1$ because it has already shown significant improvement compared with other methods such as RPCA.
\begin{figure}[h!]
\centering
\includegraphics[width=8.5cm]{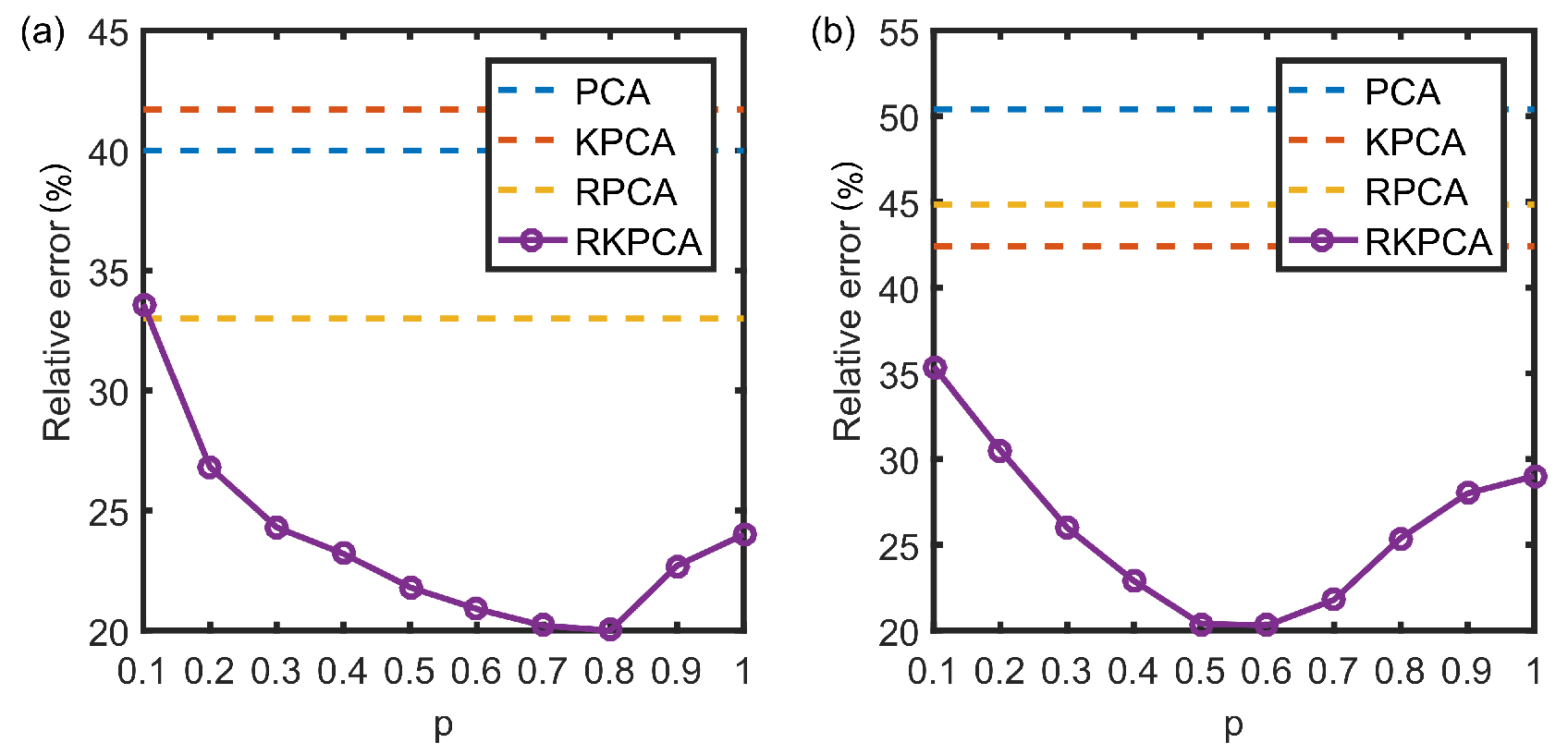}
\caption{RKPCA with different $p$: (a) single-subspace data ($\rho=50\%$); (b) multiple-subspace data ($\rho=30\%$).}\label{Fig.pvalue}
\end{figure}

\subsection{Natural images}
\subsubsection{Datasets} 
We use four datasets of natural images to evaluate the proposed method RKPCA. The details are as follows.
\begin{itemize}  
\item \textbf{MNIST}\cite{Data_MNIST1998} The dataset consists of the handwritten digits $0\sim 9$. We use a subset consisting of 1000 images (100 images for each digit) and resize all images to $20\times 20$.
\item \textbf{ORL}\cite{ORL_face} The datasets consists of the face images of 40 subjects. Each subject has 10 images with different poses and facial expressions. The original size of each image is $112\times 92$. We resize the images to $32\times 28$.
\item \textbf{COIL20}\cite{Dataset_COIL20} The datasets consists of the images of 20 objects. Each object has 72 images of different poses. We resize the images to $32\times 32$.
\item \textbf{YaleB+}\cite{Dataset_ExtendYaleB} The Extended Yale face dataset B consists of the face images of 38 subjects. Each subject has about 64 images under various illumination conditions. We resize the images to $32\times 32$.
\end{itemize}
For each of the four datasets, we consider two noisy cases. In the first case (pixel-noisy), we add salt and pepper noise of $30\%$ density to all the images. In the second case (block-noisy), we mask the images at random positions with a block of $20\%$ width and height of the image. Since the backgrounds of the MNIST and COIL20 images are black, the value of the block mask is $1$ while the value of the block mask for ORL and YaleB+ images is $0$. Figure \ref{Fig.Images} shows a few examples of the original images and noisy images.

\begin{figure}[h!]
\centering
\includegraphics[width=8cm]{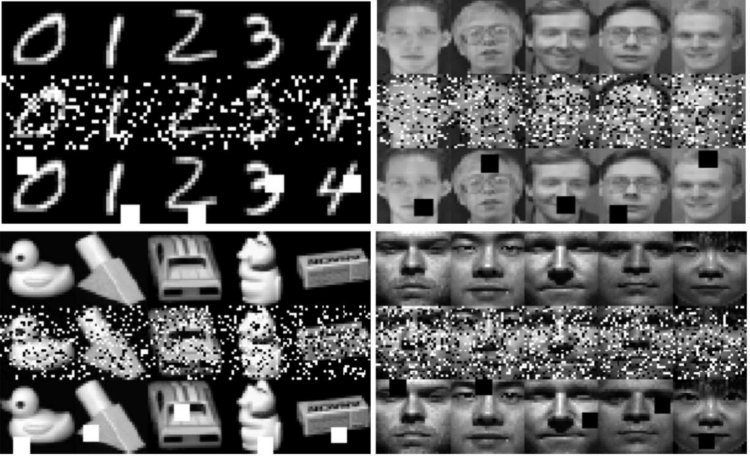}
\caption{A few samples of original images and noisy images (groups $1\sim4$: MNIST, ORL, COIL20, YaleB+; in each group, rows $1\sim3$: original, pixel-noisy, and block-noisy images)}\label{Fig.Images}
\end{figure}

\subsubsection{Noise removal}
We compare RKPCA with PCA, KPCA \cite{PreImageKPCA2004TNN}, and RPCA \cite{RPCA} in recovering the original images of the four datasets. The parameter $\sigma$ of RBF kernel in KPCA and RKPCA is determined by (\ref{Eq.kerpar}) with $\beta=1.5$. Since the original images especially those of MNIST and YaleB+ are noisy, we use one more evaluation metric, the generalization error of k-nearest neighbors. It is defined by
\begin{equation}
e_{knn}=\dfrac{1}{n}\sum_{i=1}^n y_k(i),
\end{equation}
where $y_k(i)=1$ if the label of sample $i$ is not the same as the most frequent label of its $k$ nearest neighbors and $y_k(i)=0$ otherwise. A small value of $e_{knn}$ indicates the intra-class difference is small compared to the inter-class difference. The average results of the relative error and 5nn error of 10 repeated trials are reported in Table \ref{Tab.real1}. The relative recovery error and 5nn generalization error of RKPCA are always smaller than those of other methods. A few recovered images given by RPCA and RKPCA are shown in Figure \ref{Fig.Images_re}. The recovery performances of RKPCA are consistently better than those of RPCA.

\begin{table}[h!]
\caption{Relative recovery error ($\%$) and 5nn generalization error ($\%$) of noise removal on natural images}
\centering
\begin{tabular}{lllllll}
\hline
   Dataset & Case &     Metric &        PCA &       KPCA &       RPCA &      RKPCA \\
\hline
\multirow{ 4}{*}{MNIST} & \multirow{2}{*}{pixel-noisy} &      $e_{rlt}$ &       75.8 &      69.27 &  53.61 &      {\bf 48.13} \\

 &  &      $e_{5nn}$ &       31.6 &       27.6 &       18.7 & {\bf 16.7} \\

 & \multirow{ 2}{*}{block-noisy} &      $e_{rlt}$ &      69.97 &      63.46 &      56.32 & {\bf 47.02} \\    

 &  &      $e_{5nn}$ &         39 &         38 &       24.3 & {\bf 19.9} \\
\hline
\multirow{ 4}{*}{ORL} & \multirow{ 2}{*}{pixel-noisy} &      $e_{rlt}$ &      32.06 &      29.39 &      13.85 & {\bf 12.93} \\

 &  &      $e_{5nn}$ &      60.75 &      49.25 &       6.75 & {\bf 5.75} \\

 & \multirow{ 2}{*}{block-noisy} &      $e_{rlt}$ &     26.75  &     25.11  &     17.05  & {\bf 11.23 } \\

 &  &      $e_{5nn}$ &     42.00  &     39.50  &     15.50  & {\bf 8.25 } \\
\hline
\multirow{ 4}{*}{COIL20} & \multirow{ 2}{*}{pixel-noisy} &      $e_{rlt}$ &     45.73  &     39.04  &     16.57  & {\bf 15.78} \\

 &  &      $e_{5nn}$ &       5.83 &       5.66 &       0.76 & {\bf 0.56} \\

 & \multirow{ 2}{*}{block-noisy} &      $e_{rlt}$ &      34.65 &      31.28 &       18.2 & {\bf 14.3} \\

 &  &      $e_{5nn}$ &       9.37 &       9.16 &       3.96 & {\bf 1.46} \\
\hline
\multirow{ 4}{*}{YaleB+} & \multirow{ 2}{*}{pixel-noisy} &      $e_{rlt}$ &      66.52 &      58.91 &      18.36 & {\bf 15.31} \\

 &  &      $e_{5nn}$ &      74.68 &      75.68 &  53.32 &      {\bf 46.72} \\

& \multirow{ 2}{*}{block-noisy} &      $e_{rlt}$ &      25.41 &      22.91 &      18.04 & {\bf 13.96} \\

&  &      $e_{5nn}$ &      63.62 &      53.11 &      51.53 & {\bf 42.21} \\
\hline
\end{tabular} \label{Tab.real1}
\end{table}

\begin{figure}[h!]
\centering
\includegraphics[width=8cm]{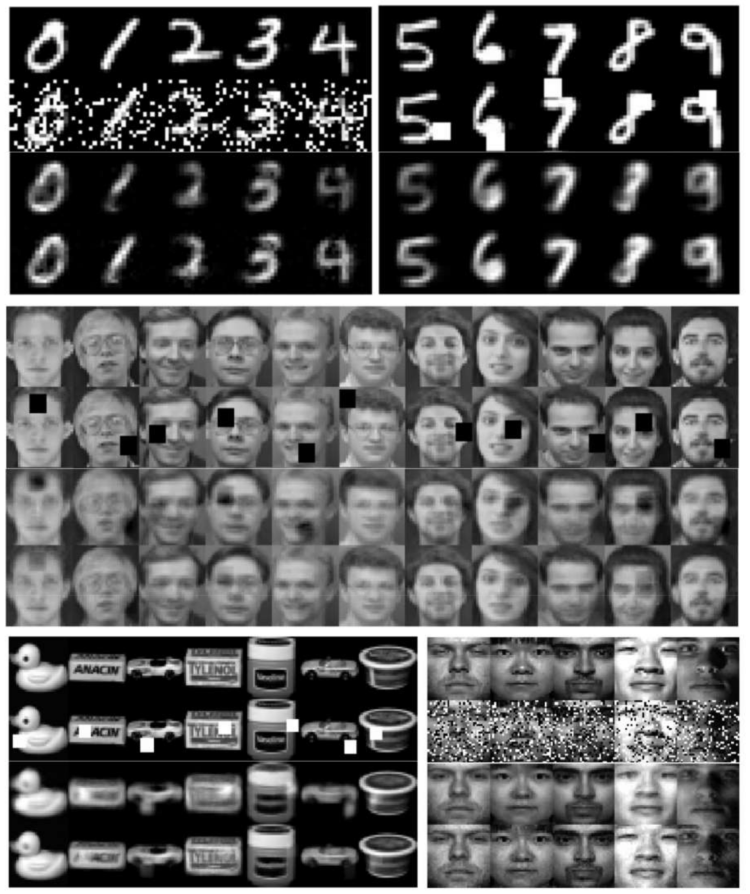}
\caption{Images recovered by RPCA and RKPCA (in each group, rows $1\sim4$ are original, noisy, RPCA-recovred and RKPCA-recovered images)}\label{Fig.Images_re}
\end{figure}

\subsubsection{Subspace clustering} 
We compare RKPCA with SSC \cite{SSC_PAMIN_2013}, NLRR \cite{NLRR_2016}, RKLRS\cite{xiao2016robust}, and RPCA \cite{RPCA} in subspace clustering on the four datasets. Particularly, we normalize all image vectors of YaleB+ dataset to have unit $\ell_2$ norm because many images have very low brightness. The numbers ($r$) of singular vectors for constructing the affinity matrices in NLRR, RPCA, and RKPCA on each dataset are as follows: MNIST, $r_{NLRR}=r_{RPCA}=r_{RKPCA}=50$; ORL, $r_{NLRR}=r_{RPCA}=r_{RKPCA}=41$; COIL20, $r_{NLRR}=r_{RPCA}=200$, $r_{RKPCA}=400$; YaleB+, $r_{NLRR}=r_{RPCA}=r_{RKPCA}=342$. The parameters $q$ (shown in Algorithm \ref{alg.sc_rkpca}) for constructing the similarity graph in NLRR, RPCA, and RKPCA for the four datasets are set as $q_{MNIST}=8$, $q_{ORL}=4$, $q_{COIL20}=6$, and $q_{YaleB+}=4$. The clustering errors (averages of 10 repeated trials) defined as
\begin{equation}
\text{clustering error}=\dfrac{\#\ \text{of misclassified points}}{\text{total}\ \#\ \text{of points}}
\end{equation}
are reported in Table \ref{Tab.real2}. It is found that for the noisy images, SSC, NLRR and RKLRS have significantly higher clustering errors than RPCA and RKPCA do. The reason is that SSC, NLRR, and RKLRS can not provide effective affinity matrices if the data are heavily corrupted. RKLRS can only handle outliers and the outliers should not be caused by sparse noise. In contrast, RPCA and RKPCA are more robust to noises. RKPCA outperforms other methods in all cases. Such clustering results are consistent with the previous noise removal results.

\begin{table}[h!]
\caption{Clustering errors ($\%$) on natural images}
\centering
\begin{tabular}{lllllll}
\hline
   Dataset &  Case &        SSC &       NLRR&      RKLRS &       RPCA &      RKPCA \\
\hline
\multirow{ 3}{*}{MNIST} &    orginal &  35.2 &       29.4 &	28.3 &       30.7 &       {\bf 26.2} \\

 & pixel-noisy &       57.5 &       65.1 & 63.6 &      37.8 & {\bf 31.6} \\

 & block-noisy &       70.1 &       62.6 &  64.9 &     50.4 & {\bf 37.3} \\ \hline

\multirow{ 3}{*}{ORL} &    orginal &       22.5 &      20.75 &  21.5 &     21.5 & {\bf 19.5} \\

 & pixel-noisy &      25.75 &       36.5 &  33.25 &    22.75 & {\bf 19.5} \\

 & block-noisy &      34.25 &       58.5 &  50.5 &     25.8 & {\bf 20.75} \\ \hline

\multirow{ 3}{*}{COIL20} &    orginal & 14.29 &      17.92 &  14.52 &    16.72 &      {\bf 13.26} \\

 & pixel-noisy &      32.06 &      36.58 &   38.64 &   21.82 & {\bf 15.42} \\

 & block-noisy &      31.86 &      39.04 &   40.12 &   26.34 & {\bf 18.61} \\ \hline

\multirow{ 3}{*}{YaleB+} &    orginal &      22.26 &      17.46 &  18.02 &    18.25 & {\bf 13.63} \\

 & pixel-noisy &       65.1 &      73.02 &  75.2 &     28.5 & {\bf 26.17} \\

 & block-noisy &      40.87 &      35.12 &  41.36 &    27.92 & {\bf 24.41} \\
\hline
\end{tabular}\label{Tab.real2}
\end{table}

\subsection{Motion data}
The Hopkins-155 dataset \cite{4269999} is a benchmark for testing feature based motion segmentation algorithms. It contains video sequences along with the features extracted and tracked in all the frames. We choose three subsets, 1R2RC, 1RT2RCRT, and 2RT3RTCRT, to evaluate the proposed method RKPCA. Similar to \cite{yang2015sparse}, for each subset, we uniformly sample 6 frames and hence the feature dimension is 12, through which the formed matrix is of high-rank. Since the entry value of the three matrices is within interval $(-1,1)$, we add sparse noises randomly drawn from $\mathcal{U}(-1,1)$ to the matrices. 

We use RPCA and RKPCA to remove the noises of the matrices and then perform SSC to cluster the data for motion segmentation. The reason we perform clustering using SSC but not RPCA and RKPCA is that SSC has achieved state-of-the-art clustering accuracy on Hopkins-155 dataset. In addition, two variants of SSC including the SCHQ method proposed in \cite{7159061} and the SR-SSC proposed in \cite{SRSSC} are also compared in this study. In RKPCA, the parameter $\sigma$ of RBF kernel is determined by (\ref{Eq.kerpar}) with $\beta=0.5$. As shown in Figure \ref{Fig.Hopk_r}, the recovery error (defined by (\ref{Eq.e_rlt})) of RKPCA is often only $50\%$ of that of RPCA on the three data subsets. The clustering errors are shown in Figure \ref{Fig.Hopk_c}. In every case, the clustering error of RKPCA+SSC is considerably lower than those of SSC, RPCA+SSC, SCHQ, and SR-SSC.

\begin{figure}[h!]
\centering
\includegraphics[width=8.5cm]{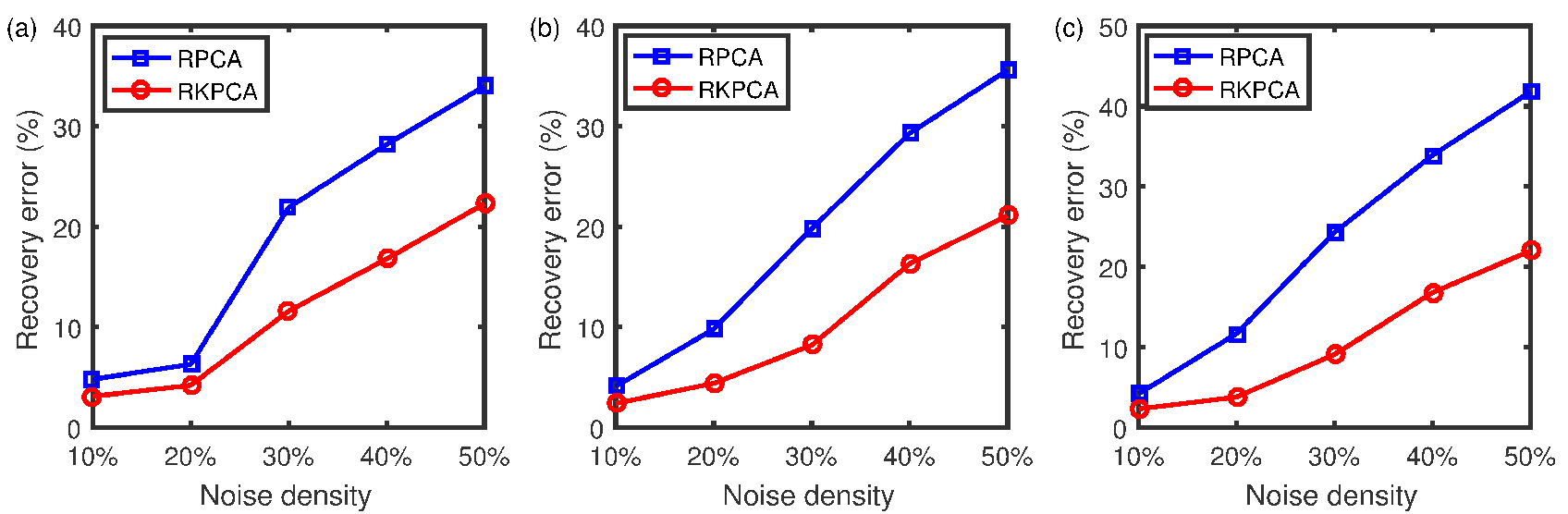}
\caption{Recovery errors on Hopkins-155 data: (a) 1R2RC; (b) 1RT2RCRT; (c) 2RT3RTCRT}\label{Fig.Hopk_r}
\end{figure}

\begin{figure}[h!]
\centering
\includegraphics[width=8.5cm]{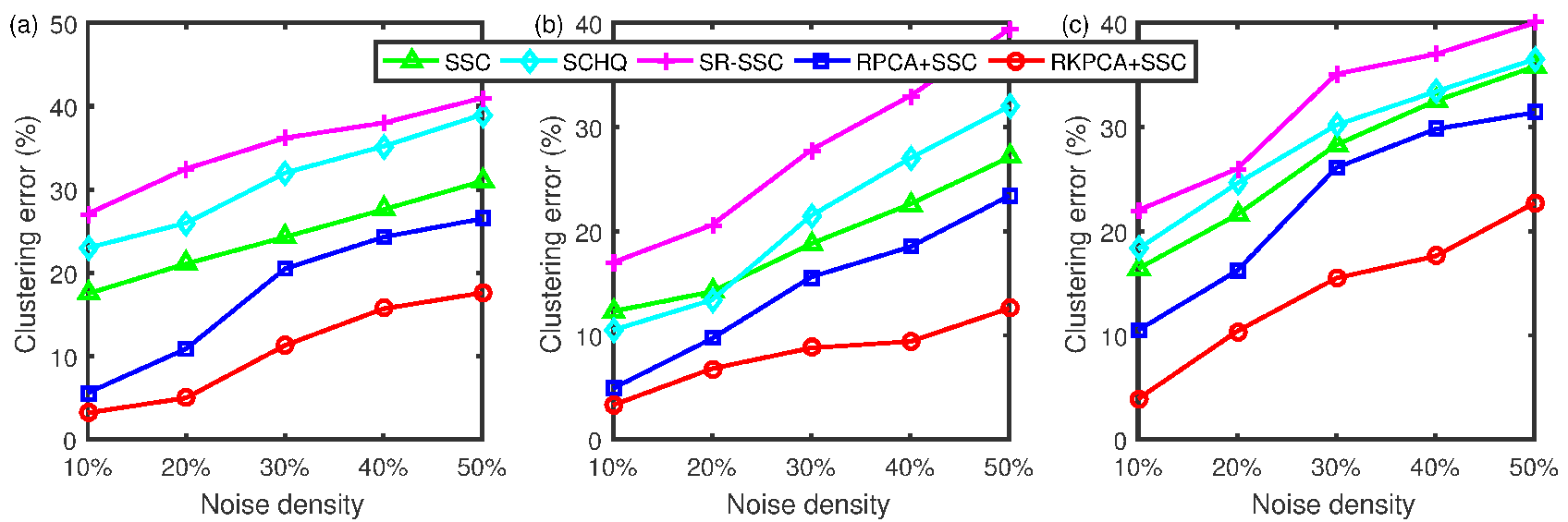}
\caption{Clustering errors on Hopkins-155 data: (a) 1R2RC; (b) 1RT2RCRT; (c) 2RT3RTCRT}\label{Fig.Hopk_c}
\end{figure}

\section{Conclusion}
Recovering corrupted high/full-rank matrix is a challenge issue to existing matrix recovering methods including RPCA. We derived and proposed the RKPCA method to solve the problem.  In this paper, we included an important theoretical proof showing the superiority of RKPCA over RPCA theoretically. This paper shows that RKPCA to date is the only unsupervised nonlinear method that exhibits strong robustness to sparse noises. We also proposed two nonconvex algorithms, ADMM+BTLS and PLM+AdSS, to solve the challenging optimization of RKPCA. The convergence proof are also included. It is interesting to note that PLM+AdSS is more computationally efficient than ADMM+BTLS. Thorough comparative studies showed that RKPCA significantly outperformed PCA, KPCA, RPCA, SSC, NLRR, and RKLRS in noise removal, subspace clustering, or/and motion segmentation.


\bibliography{RKPCA}
\bibliographystyle{unsrt}

\begin{IEEEbiography}[{\includegraphics[width=1in,height=1.25in,clip,keepaspectratio]{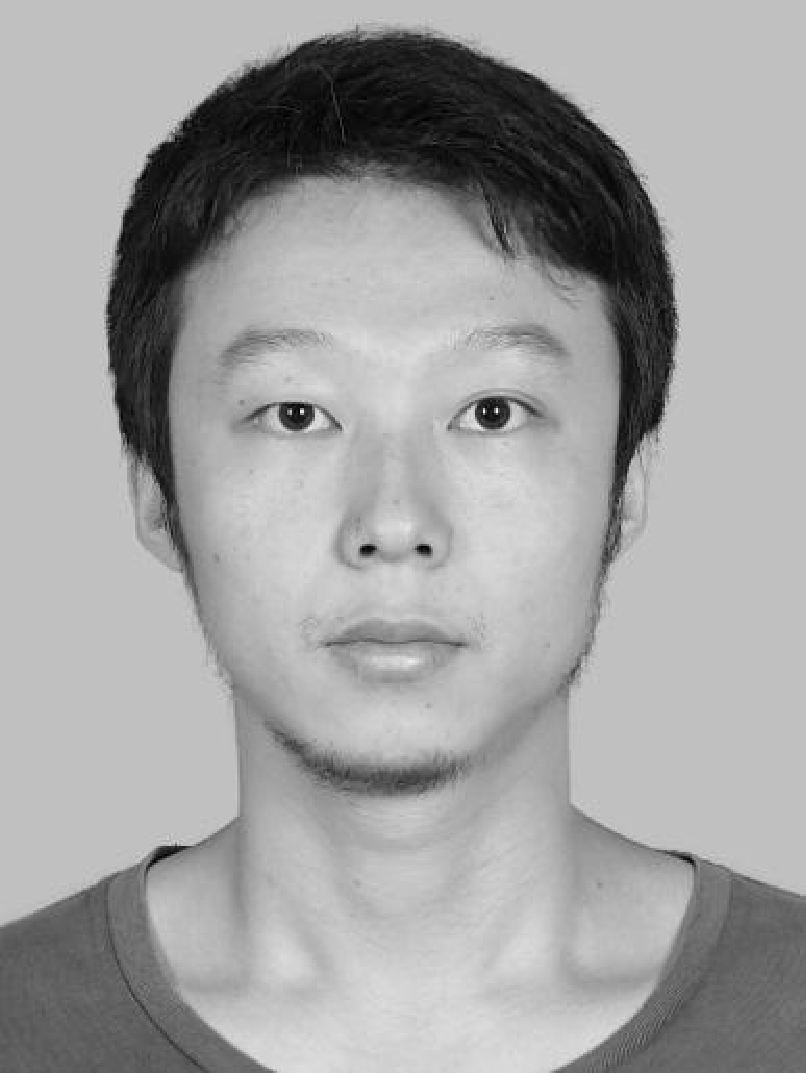}}]{Jicong Fan}
received his B.E and M.E degrees in Automation and Control Science \& Engineering, from Beijing University of Chemical Technology, Beijing, P.R., China, in 2010 and 2013, respectively. From 2013 to 2015, he was a research assistant at the University of Hong Kong and focused on signal processing and data analysis for neuroscience. Currently, he is working toward the PhD degree at the Department of Electronic Engineering, City University of Hong Kong, Kowloon, Hong Kong S.A.R. His research interests include data mining and machine learning.\\
\end{IEEEbiography}
\begin{IEEEbiography}[{\includegraphics[width=1in,height=1.25in,clip,keepaspectratio]{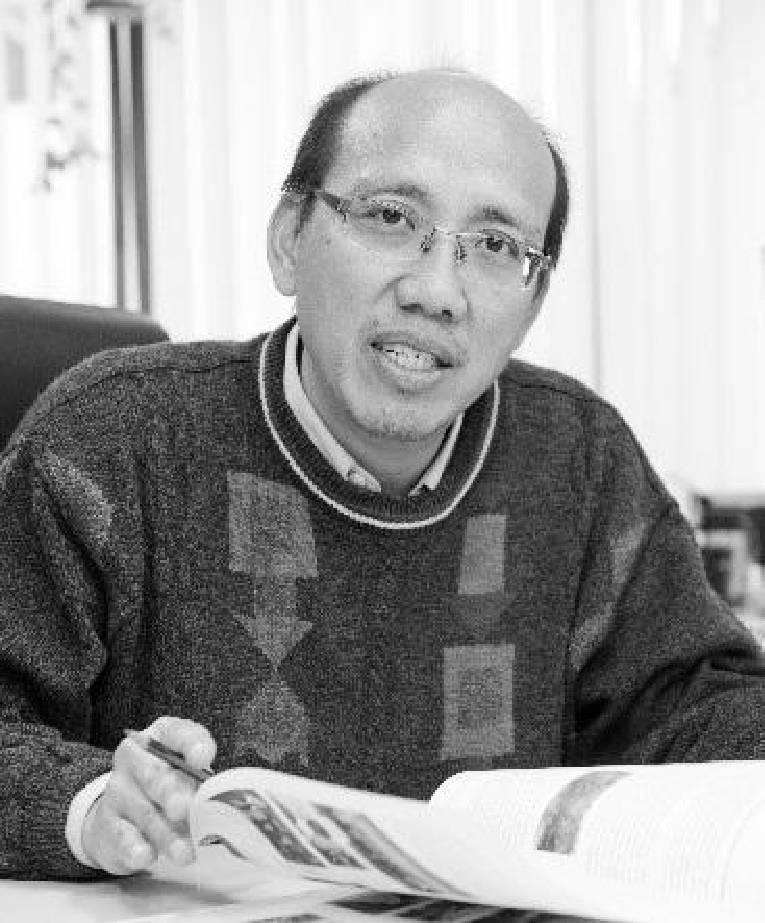}}]{Tommy W. S. Chow}
received the B.Sc. (1st Hons) degree and the Ph.D. degree from the Department of Electrical and Electronic Engineering, University of Sunderland, U.K. He is currently a Professor in the Department of Electronic Engineering at the City University of Hong Kong. His main research areas include neural networks, machine learning, pattern recognition, and fault diagnosis.  He received the Best Paper Award in 2002 IEEE Industrial Electronics Society Annual meeting in Seville, Spain. He is an author and co-author of over 170 technical Journal articles related to his research, 5 book chapters, and 1 book.  He now serves the IEEE Transactions on Industrial Informatics and Neural Processing letters as Associate editor. He is a fellow of IEEE.
\end{IEEEbiography}

\end{document}